\DeclareMathOperator*{\argmin}{arg\,min}
\newtheorem{theorem}{Theorem}
\newtheorem{lemma}{Lemma}
\newtheorem{claim}{Claim}
\newtheorem{remark}{Remark}
\newcommand{\vect}[1]{\ensuremath{\bm{#1}}}
\newcommand{\E}{\ensuremath{\mathbb{E}}}
\newcommand{\norm}[1]{\left\lVert#1\right\rVert}
\title{Online Decentralized Frank-Wolfe: From theoretical bound to applications in smart-building
\thanks{Supported by the Multidisciplinary Institute in Artificial Intelligence, Univ.Grenoble Alpes, France (ANR-19-P3IA-0003)} 
}
\author{
  Angan Mitra \\
  Qarnot Computing\\
  University Grenoble-Alpes \\
  France \\
  \texttt{angan.mitra@qarnot-computing.com} \\
   \And
  Nguyen Kim Thang \\
  IBISC, University of Evry \\
  University Paris-Saclay \\
  France \\
  \texttt{kimthang.nguyen@univ-evry.fr} \\
  \AND
  {Tuan-Anh Nguyen, Denis Trystram, Paul Youssef} \\
  LIG, INRIA, Grenoble INP \\
  University Grenoble-Alpes \\
  France \\
  \texttt{\{tuan-anh.nguyen,paul.youssef\}@inria.fr} \\
  \texttt{denis.trystram@imag.fr}
}
\begin{document}
\maketitle

\begin{abstract}
The design of decentralized learning algorithms is important in the fast-growing world in which data are distributed over participants with limited local computation resources and  communication. In this direction, we propose an online algorithm minimizing non-convex loss functions aggregated from individual data/models distributed over a network. We provide the theoretical performance guarantee of our algorithm and demonstrate its utility on a real life smart building.
\end{abstract}

\keywords{Online Optimization \and Decentralized Learning \and Smart-Building}

\section{Introduction}
\label{chap:intro}
The popularity of sensors and IoT devices has the potential of generating and equivalently accumulating data in order of Zeta bytes \cite{maccarthy2018defense} annually.
High throughput, low latency, data consumption, networking dependencies are often the key metrics in designing high-performance learning algorithms under the constraint of low powered computing.
In recent times, there has been an alternate trend to process data on cloud or dump into a centralized database.
Commonly known as edge computing, the new paradigm embraces the idea of using interconnected computing nodes to reduce high bandwidth consuming data uploads, privacy preservation of data and knowledge  on the fly.

Smart building applications typically have a profound implication on environment in terms of energy savings, reduction of green house emission, etc.
Predicting the future often forms the basis of corrective actions taken by such apps and can be regarded as a predominant use-case of machine learning. 
Usually the data is generated across multiple zones from heterogeneous sensors and forms a setting of decentralized learning.
In recent times, the hardware-software interface has benefited from advances in network communication coupled with edge computing. 
Thus deploying a machine learning model in site and processing data on the fly has become a realistic alternative of sending data to a centralised data base.
Optimizing problems to maintain robust solutions under the uncertainty of future is a nice to have feature for such cyber physical systems.
Contrary to the classical train-test-deploy framework, online learning offers continual learning where during run time, a batch of sensor data has the potential to update an AI model on site. 

This work aligns with the edge computing paradigm by proposing an online and decentralized learning algorithm. 
Online learning helps better adapt to the uncertainty of the future where the data pattern continually changes over time. The designed algorithm repeatedly chooses a high-performance strategy given a set of actions compared to the best-fixed action in hindsight. Instead of having a centralized mediator, the decentralized setting promotes peer-to-peer knowledge exchanges while prohibiting data sharing between learners. Many proposed online decentralized algorithms use gradient descent-based methods to solve constraint problems. Such an approach requires projection into the constraint set that usually involves intensive computation, which is not best suited in the context of sensors and IoT. We aim to design a competitive, robust algorithm in the decentralized and online setting that has the flexibility of being projection-free.

\paragraph*{\textbf{Problem setting}} Formally, we are given a convex set $\mathcal{K} \subseteq \mathbb{R}^d$ and a set of agents connected over a network represented by a graph $G = (V, E)$ where $n  = |V|$ is the number of agents. 
At every time $1 \leq t \leq T$, each agent $i \in V$ can communicate with (and only with) its immediate neighbors, i.e., adjacent agents in $G$ and 
takes a decision $\vect{x}^{t}_{i} \in \mathcal{K}$. 
Subsequently, a batch of new data is revealed exclusively to agent $i$ and from its own batch, a non-convex cost function $f^{t}_{i}: \mathcal{K} \rightarrow \mathbb{R}$
is induced locally. Although each agent $i$ observes only function $f^{t}_{i}$, 
agent $i$ is interested in the cumulating cost $F^{t}(\cdot)$  where 
$F^{t}(\cdot) := \frac{1}{n} \sum_{j=1}^{n} f^{t}_{j}(\cdot)$. In particular, at time $t$, 
the cost of agent $i$ with the its chosen $\vect{x}^{t}_{i}$ is $F^{t}(\vect{x}^{t}_{i})$.  
The objective of each agent $i$ is to minimize the total cumulating cost $\sum_{t=1}^{T} F^{t}(\vect{x}^{t}_{i})$
via local communication with its immediate neighbors.

When the cost functions $f^{t}_{i}$ are convex, a standard measure is the \emph{regret} notion. 
An online algorithm is \emph{$R(T)$-regret} if for every agent $1 \leq i \leq n$, 
\begin{align*}
\frac{1}{T} \biggl( \sum_{t = 1}^T F^t(\vect{x}^t_i) - \min_{\vect{o} \in \mathcal{K}} \sum_{t=1}^T F^t(\vect{o}) \biggr) \leq R(T)
\end{align*}

As the cost functions in the paper are not necessarily convex, we consider a stationary measure on the quality of 
solution based on the Frank-Wolfe gap \cite{Jaggi:2013}, and that can be considered the counter-part of the regret in 
the non-convex setting. Specifically, we aim to bound the \emph{convergence gap}, for every agent $1 \leq i \leq n$:
\begin{align}	\label{eq:gap_def}
    \max_{\vect{o} \in \mathcal{K}} \frac{1}{T} \sum_{t=1}^{T} \langle \nabla F^{t}(\vect{x}^{t}_{i}), \vect{x}^{t}_{i} - \vect{o}\rangle 
    \end{align}
In the same spirit as the regret, the measure of convergence gap compares the total cost of every agent to that of the best stationary point in hindsight. 
Note that when the functions $F^{t}$ are convex, the convergence gap is always upper bounded by the regret. 
Moreover, when the problem becomes offline, i.e., all $F^{t}$ are the same, the convergence gap measures the speed of convergence to a stationary solution.

\subsection{Our contribution}

The challenge in designing robust and efficient algorithms for the problem is to resolve the following issues together: 
the uncertainty (online setting, agents observe their own loss functions only after choosing their decisions),  
the partial information (decentralized setting, agents know only its own loss functions while aiming to minimize the cumulating cost),
and the non-convexity of the loss functions. 
As a starting point, we consider the Meta Frank-Wolfe (MFW) algorithm~\cite{ChenHassani18:Online-continuous} in the (centralized, convex) online setting and the Decentralized Frank-Wolfe (DFW) algorithm \cite{WaiLafond17:Decentralized-Frank--Wolfe} in the decentralized (offline) setting. However, these algorithms work either in the online setting or in the decentralized one but not both together. The difficulty in our problem, as mentioned earlier, is to resolve all issues together. 

In the paper, we present algorithms, subtly built on MFW and DFW algorithms, that achieves the convergence gap of $O(T^{-1/2})$ and $O(T^{-1/4})$ in cases where the exact gradients or only stochastic gradients of loss functions are available, respectively.  
Note that in the former, the convergence gap of $O(T^{-1/2})$ asymptotically matches the best regret guarantee even in the centralized offline settings with convex functions.
Besides, one can convert the algorithms to be projection-free by choosing appropriate oracles used in the algorithm. This property provides a flexibility 
to apply the algorithms to different settings depending on the computing capacity of local devices. 
Our work applies to online neural network optimization amongst a group of autonomous learners.
We demonstrate the practical utility of our algorithm in a smart building application where zones mimic learners optimizing a temperature forecasting problem.
We provide a thorough analysis of our algorithms in different angles of the performance guarantee (quality of solutions), the effects of network topology and decentralization, which are predictably explained by our theoretical results.

\subsection{Related Work}
\label{sec: relatedworks}

\paragraph{Decentralized Online Optimization.}  %
Authors \cite{Yan:2013} introduced decentralized online projected subgradient descent and showed vanishing regret for convex and strongly convex functions. 
In contrast, Hosseini et al.~\cite{Hosseini:2013} extended distributed dual averaging technique to the online setting using a general regularized projection for both unconstrained and constrained optimization.
A distributed variant of online conditional gradient \cite{Hazanothers16:Introduction-to-online} was designed and analyzed in~\cite{Zhang:2017} that requires linear minimizers and uses exact gradients. 
However, computing exact gradients may be prohibitively expensive for moderately sized data and intractable when a closed-form does not exist. 
In this work, we go a step ahead in designing a distributed algorithm that uses stochastic gradient estimates and provides a better regret bound than in~\cite{Zhang:2017}.

\paragraph{Learning on the edge. } Over the year, edge computing has become an exciting alternative for cloud-based learning by processing the data closer to end devices while ensuring data confidentiality and reducing transmission. \cite{edge-gradient-descent}  proposes a distributed framework for non-i.i.d data using multiple gradient descent-based algorithms to update local models and a dedicated edge unit for global aggregation. Another popular approach is to reduce the memory size of classical machine learning models to meet edge resource constraints. \cite{decision-jungle} and \cite{pruning-rf} similarly takes this idea by building a tree-based learning framework with a considerable reduction in memory using compression and pruning. At the same time, \cite{protonn} introduce an edge-friendly version of k-nearest neighbor \cite{knn} by projecting the data into a lower-dimensional space. 
Besides traditional machine learning algorithms, adapting deep learning models to work on edge devices is an emerging research domain. In \cite{convolution-pruning,neural_pruning_dynamic}, the authors propose a pruning technique on convolutional network for faster computation while preserving the model ability. Another approach using weight quantization is proposed in \cite{weight-quantization}. The current dominant paradigm is federated learning \cite{federatedLearning2017,McMahanothers:Advances-and-Open}, where offline centralized training is performed through a star network with multiple devices connected to a central server. However, decentralized training is more efficient than centralized one when operating on networks with low bandwidth or high latency \cite{LianZhang17:Can-decentralized-algorithms,HeBian18:Cola:-Decentralized}. In this paper, we go one step further by studying arbitrary communication networks without a central coordinator and the local data (so local cost functions) evolve.

\paragraph{Thermal Profiling a Building.} 
Usually, building monitoring sensors are distributed across a building and thus acts as a scattered data lake with potentially heterogeneous patterns.
Indoor temperature is an important factor in controlling Heating Ventilation Air Conditioning systems that maintain ambient comfort within a building \cite{gupta2015distributed}.
Typically such embedded systems run in anticipatory mode where temperature prediction \cite{cai2019day} of controlled building zones helps in maintaining thermal consistency.
A multitude of factors effect the thermal profile like outdoor environment, opening/closing of windows, number of occupants, etc, which are hard to get and often rely on intrusive mechanisms to gather the data.
Researchers have utilized deep learning models ~\cite{zamora2014line} in the context of online learning of temperature, but lack the benefit of interacting with multiple similar sensors.
This study seeks to generate a thermal profile of a building by only utilizing temperature data from multiple zones of a building in order to extract patterns about thermal variation.
The proposed methodology not only processes data on the fly \cite{abdel2019data}, but also identifies meaningful topological data exchange networks that can best predict multi zonal temperature settings.

\section{Conditional Gradient based Algorithm}
\label{chap:formulation}

In this section, after introducing and recalling useful notions, we will 
first provide an algorithm for the setting with exact gradients. Subsequently, building on the salient ideas of that algorithm, 
we extend to the more realistic setting with stochastic gradients. 

\subsection{Preliminaries and Notations}
\label{sec:math_notation}
Given an undirected graph $G = (V, E)$,
the set of neighbors of an agent $i \in V$ is $N(i) := \{j \in V: (i,j) \in E\}$.  
Consider a symmetric matrix $W \in \mathbb{R}_{+}^{n \times n}$ defined as follows. 
The entry $W_{ij}$ has a value of 
\begin{align*}
    W_{ij} = \begin{cases}
            \dfrac{1}{1+\max\{d_i, d_j\}} & \text{if $(i,j) \in E$}\\
            0 &  \text{if $(i,j) \not\in E$,$i \neq j$}\\
            1 - \sum_{j \in N(i)} W_{ij} & \text{if $i=j$}
        \end{cases}
\end{align*}
where $d_i = |N(i)|$, the degree of vertex $i$.
In fact, the matrix $W$ is doubly stochastic, i.e $W \vect{1} = W^T \vect{1} = \vect{1}$ and so it inherits several useful properties of 
doubly stochastic matrices. We use boldface letter e.g $\vect{x}$ to represent vectors. We denote $\vect{x}^t_i$ as the decision vector of agent $i$ at time step $t$. We suppose that the constraint set $\mathcal{K}$ is a bounded convex set with diameters $D = \sup_{\vect{x}, \vect{y} \in \mathcal{K}} \| \vect{x} - \vect{y} \|$. 

A function $f$ is \emph{$\beta$-smooth} if for all $\vect{x}, \vect{y} \in \mathcal{K}$ :
\begin{align*}
     f(\vect{y}) \leq f(\vect{x}) + \langle \nabla f(\vect{x}), \vect{y}-\vect{x} \rangle + \frac{\beta}{2}\|\vect{y}-\vect{x}\|^2 
\end{align*} or equivalently $\| \nabla f(\vect{x}) - \nabla f(\vect{y})\| \leq \beta \| \vect{x} - \vect{y} \|$. Also, we say a function $f$ is \emph{$G$-Lipschitz} if for all $\vect{x}, \vect{y} \in \mathcal{K}$
\begin{align*}
    \|f(\vect{x}) - f(\vect{y}) \| \leq G \|\vect{x} - \vect{y}\|
\end{align*}

In our algorithm, we make use of linear optimization oracles where its role is to resolve an online linear optimization problem given a feedback function and a constraint set. 
Specifically, in the online linear optimization problem, at every time $1 \leq t \leq T$, one has to select $\vect{u}^{t} \in \mathcal{K}$. 
Subsequently, the adversary reveals a vector $\vect{d}^{t}$ and feedbacks the cost function $\langle \cdot , \vect{d}^t \rangle$.
The objective is to minimize the regret, i.e., 
$\frac{1}{T} \bigl( \sum_{t=1}^{T} \langle \vect{u}^{t}, \vect{d}^t \rangle - \min_{\vect{u}^{*} \in \mathcal{K}} \sum_{t=1}^{T} \langle \vect{u}^{*}, \vect{d}^t \rangle \bigr)$. 
Several algorithms \cite{Hazanothers16:Introduction-to-online} provide an optimal regret bound of $\mathcal{R}^T = O(1/\sqrt{T})$ for the online linear optimization problem.
These algorithms include the online gradient descent algorithm or the follow-the-perturbed-leader algorithm (projection-free). One can pick one of such algorithms 
to be an oracle resolving the online linear optimization problem.

\subsection{An Algorithm with Exact Gradients}
\label{sec:exact}
Assume that the exact gradients of the loss functions $f^{t}_{i}$ are available (or can be computed). 
The high-level idea of the algorithm is the following. In the algorithm, at every time $t$, each agent $i$ executes $L$ steps of the Frank-Wolfe algorithm 
where every update vector (for iterations $1 \leq \ell \leq L$ where the parameter $L$ will be chosen later) is constructed by 
combining the outputs of linear optimization oracles $\mathcal{O}_{j,\ell}$
and the current vectors of its neighbors $j \in N(i)$.  During this execution, a set of feasible solutions $\{\vect{x}_{i,\ell}^{t}: 1 \leq \ell \leq L\}$ is computed.
The solution $\vect{x}_{i}^{t}$ for each agent $1 \leq i \leq n$ is then chosen uniformly at random among $\{\vect{x}_{i,\ell}^{t}: 1 \leq \ell \leq L\}$.
Subsequently, after communicating and aggregating the information related to functions $f_{j}^{t}$
for $j \in N(i)$, the algorithm computes a vector $\vect{d}^{t}_{i,\ell}$ and feedbacks $\langle \vect{d}^t_{i,\ell}, \cdot \rangle$ as the cost function at time $t$ to the oracle $\mathcal{O}_{i,\ell}$ for $1 \leq \ell \leq L$.
The vectors $\vect{d}^{t}_{i,\ell}$'s are subtly built so that it captures step-by-step more and more information on the cumulating cost functions.    
The formal description is given in Algorithm~\ref{algo:online-dist-FW} and a detailed proof of \Cref{thm:gap} is given in \cite{tuan_anh_nguyen_2022_6435193}

\begin{algorithm}[ht!]
\begin{flushleft}
\textbf{Input}:  A convex set $\mathcal{K}$, 
	a time horizon $T$, a parameter $L$, online linear optimization oracles $\mathcal{O}_{i,1}, \ldots, \mathcal{O}_{i,L}$ for each agent $1 \leq i \leq n$, 
	step sizes $\eta_\ell \in (0, 1)$ for all $1 \leq \ell \leq L$
\end{flushleft}
\begin{algorithmic}[1]
\FOR {$t = 1$ to $T$}	 		
	\FOR{every agent $1 \leq i \leq n$}	%
		\STATE Initialize arbitrarily $\vect{x}^t_{i,1} \in \mathcal{K}$ 
		\FOR{$1 \leq \ell \leq L$}
			\STATE Let $\vect{v}^{t}_{i,\ell}$ be the output of oracle $\mathcal{O}_{i,\ell}$ at time step $t$.
			\STATE Send $\vect{x}^{t}_{i,\ell}$ to all neighbours $N(i)$
			\STATE \label{alg:y} 
				Once receiving $\vect{x}^{t}_{j,\ell}$ from all neighbours $j \in N(i)$, 
				set $\vect{y}^{t}_{i,\ell} \gets \sum_{j} W_{ij} \vect{x}^{t}_{j,\ell}$.
			\STATE \label{alg:x} Compute $\vect{x}^{t}_{i,\ell+1} \gets (1 - \eta_{\ell}) \vect{y}^{t}_{i,\ell} + \eta_{\ell} \vect{v}^{t}_{i,\ell}$.
		\ENDFOR
		\STATE Choose $\vect{x}^{t}_{i} \gets \vect{x}^{t}_{i,\ell}$ for $1 \leq \ell \leq L$ with probability $\frac{1}{L}$ and play $\vect{x}^t_{i}$
		\STATE Receive function $f^{t}_{i}$ 
		\STATE Set $\vect{g}^{t}_{i,1} \gets \nabla f^{t}_{i}(\vect{x}^{t}_{i,1})$
			\FOR{$1 \leq \ell \leq L$}
				\STATE Send $\vect{g}^{t}_{i,\ell}$ to all neighbours $N(i)$.
				\STATE After receiving $\vect{g}^{t}_{j,\ell}$ from all neighbours $j \in N(i)$, compute
					$\vect{d}^{t}_{i,\ell} \gets  \sum_{j \in N(i)} W_{ij} \vect{g}^{t}_{j,\ell}$
					and
					$\vect{g}^{t}_{i,\ell + 1} \gets \bigl( \nabla f^{t}_{i}(\vect{x}^t_{i,\ell+1}) 
						-  \nabla f^{t}_{i}(\vect{x}^{t}_{i,\ell}) \bigr) + \vect{d}^{t}_{i,\ell}$.
				\STATE Feedback function $\langle \vect{d}^{t}_{i,\ell}, \cdot \rangle$ 
				to oracles $\mathcal{O}_{i,\ell}$. (The cost of the oracle $\mathcal{O}_{i,\ell}$ at time $t$ is 
				$\langle \vect{d}^{t}_{i,\ell}, \vect{v}^{t}_{i,\ell}  \rangle$.)
			\ENDFOR
	\ENDFOR
\ENDFOR
\end{algorithmic}
\caption{Online Decentralized algorithm}
\label{algo:online-dist-FW}
\end{algorithm}

\begin{theorem}
\label{thm:gap}
Let $\mathcal{K}$ be a convex set with diameter D. Assume that functions $F^{t}$ (possibly non convex) are $\beta$-smooth and G-Lipschitz for every $1 \leq t \leq T$. 
Then, by choosing the step size $\eta_{\ell} = \min\left(1, \frac{A}{\ell^{\alpha}}\right)$ for some $A \geq 0$ and $\alpha \in (0,1)$, 
Algorithm \ref{algo:online-dist-FW} guarantees that for all $1 \leq i \leq n$:
\setlength{\textfloatsep}{0pt}
    \begin{align*}
      \max_{\vect{o} \in \mathcal{K}}\frac{1}{T} &\sum_{t=1}^{T} \E_{\vect{x}^t_i} \bigl [\langle \nabla F^{t}(\vect{x}^t_{i}), \vect{x}^t_{i} - \vect{o}\rangle \bigr] 
      \leq O \left( \frac{GDA^{-1}}{L^{1-\alpha}}  
         + \frac{AD^2 \beta/2}{L^{\alpha}(1-\alpha)} + \mathcal{R}^{T} \right) 
    \end{align*}
where $\mathcal{R}^T$ is the regret of online linear minimization oracles. Choosing $L=T$, $\alpha = 1/2$ and oracles as gradient descent or follow-the-perturbed-leader with regret $\mathcal{R}^T =
O\bigl(T^{-1/2}\bigr)$, we obtain the gap convergence rate of $O\bigl(T^{-1/2}\bigr)$.
\end{theorem}

\subsection{Algorithm with Stochastic Gradients}

We extend the previous algorithm to the setting of stochastic gradients estimates. As only stochastic gradient estimates are available, 
we use a variance reduction technique in order to upgrade~\Cref{algo:online-dist-FW} to its stochastic version (Algorithm 2).
The difference between the two algorithms is stochastic gradient estimation and an additional step for variance reduction. 
After making the decision, the agent receives an unbiased gradient to perform updates and communication to obtain stochastic estimates $\widetilde{\vect{g}}^t_{i,\ell}$
and $\widetilde{\vect{d}}^t_{i,\ell}$ of $\vect{g}^t_{i,\ell}$ and $\vect{d}^t_{i,\ell}$, respectively. (Note that the stochastic variables are denoted by the same letter as its exact counterpart with an additional tilde symbol.) Then the agent uses Step \ref{step:var-red} in  \Cref{algo:online-dist-FW-stoc}  to get the reduced variance version $\widetilde{\vect{a}}^t_{i,\ell}$ of $\widetilde{\vect{d}}^t_{i,\ell}$. The function $\langle \widetilde{\vect{a}}^t_{i,\ell}, \cdot \rangle$ is then feedbacked to the oracle.

The formal description is given in \Cref{algo:online-dist-FW-stoc} in which all previous steps are the same as \Cref{algo:online-dist-FW}
and the additional variance reduction step is marked in red. A detailed proof of \Cref{thm:stoc:version2} can be found in \cite{tuan_anh_nguyen_2022_6435193}. 

\begin{algorithm}
		\qquad $\ldots$
\begin{algorithmic}[1]
\setcounter{ALC@line}{11}
		\STATE Receive function $f^{t}_{i}$ and an unbiased gradient estimate $\widetilde {\nabla} f^{t}_{i}$
		\STATE Set $\widetilde{\vect{g}}^{t}_{i,1} \gets \widetilde{\nabla} f^{t}_{i}(\vect{x}^{t}_{i,1})$
		\FOR{$1 \leq \ell \leq L$}
				\STATE Send $\widetilde{\vect{g}}^{t}_{i,\ell}$ to all neighbours $N(i)$.
				\STATE After receiving $\widetilde{\vect{g}}^{t}_{j,\ell}$ from $j \in N(i)$, compute
					$\widetilde{\vect{d}}^{t}_{i,\ell} \gets  \sum_{j \in N(i)} W_{ij} \widetilde{\vect{g}}^{t}_{j,\ell}$ and set $\widetilde{\vect{g}}^{t}_{i,\ell + 1} \gets \bigl( \widetilde{\nabla} f^{t}_{i}(\vect{x}^t_{i,\ell+1}) 
						-  \widetilde{\nabla} f^{t}_{i}(\vect{x}^{t}_{i,\ell}) \bigr) + \widetilde{\vect{d}}^{t}_{i,\ell}$.
				\STATE \label{step:var-red} {\color{red} $\widetilde{\vect{a}}^t_{i, \ell} \gets (1 - \rho_\ell) \cdot \widetilde{\vect{a}}^t_{i, \ell-1} + \rho_\ell \cdot \widetilde{\vect{d}}^{t}_{i,\ell}$}.
				\STATE Feedback function $\langle \widetilde{\vect{a}}^{t}_{i,\ell}, \cdot \rangle$ 
				to oracles $\mathcal{O}_{i,\ell}$. (The cost of the oracle $\mathcal{O}_{i,\ell}$ at time $t$ is 
				$\langle \widetilde{\vect{a}}^{t}_{i,\ell}, \vect{v}^{t}_{i,\ell}  \rangle$.)
		\ENDFOR
\end{algorithmic}
\caption{Stochastic online decentralized algorithm}
\label{algo:online-dist-FW-stoc}
\end{algorithm}

\begin{theorem}
\label{thm:stoc:version2}
Let $\mathcal{K}$ be a convex set with diameter $D$. Assume that for every $1 \leq t \leq T$. 
\begin{enumerate}
	\item functions $f^{t}_{i}$ are $\beta$-smooth and $G$-Lipschitz, 
	\item the gradient estimates are unbiased with bounded variance $\sigma^{2}$,
	\item the gradient estimates are Lipschitz.
\end{enumerate}
Then, choosing the step-sizes $\eta_\ell = \min \{1, \frac{A}{\ell^{3/4}}\}$ for some $A \geq 0$, we have for all $1 \leq i \leq n$, 
    \begin{align*}
        & \max_{\vect{o} \in \mathcal{K}} \E \Bigl[ \frac{1}{T} \sum_{t=1}^T \E_{\vect{x}_i^t}\bigl [\langle \nabla F_t\left( \vect{x}_i^t \right), \vect{x}_i^t - \vect{o} \rangle \bigr] \Bigl] 
         \leq O \left( \frac{DG + 2ADQ^{1/2}}{L^{1/4}}
            + \frac{2AD^2\beta}{L^{3/4}} + \mathcal{R}^T \right) 
    \end{align*}
Choosing $L=T$ and oracles with regret $\mathcal{R}^T =
O\left(T^{-1/2}\right)$, we obtain the convergence gap of $O\left( T^{-1/4} \right).$

\end{theorem}

\section{Experiments}
\label{chap:experiments }

The data-set used for experimentation comes from a 7 storey building with 24 sensor equipped zones \cite{data-bangkok}.The zone-wise knowledge exchange happens through the edges of an undirected graph of $n$ nodes participating in the learning process.
For every round $t$, each node $i$ receives a batch $\mathcal{B}^t_i$ of 32 time-series sequences corresponding to a look-back period 13 timestep to predict the temperature of the next timestep. We extract the data from March $7^{th}$ to April $20^{th}$ for training, set $L$ equal to 360, $\alpha = 0.95$ and $A = 1$. 
A min-max scaler is used to normalize the data and we apply a rolling window with stride 1 on the original time series.
Each node is embedded with a model built from a two-layers long-short-time-memory (LSTM) network followed by a fully connected layer. Denote the output of the model $i$ for a data sequence $b$ at time $t$  by $\hat{y}^t_{i,b}$ and its ground truth by $y^t_{i,b}$. Consider the $\ell_1$ loss as the objective function :
\begin{linenomath}
    \begin{align*}
        \mathcal{L}(\hat{y}^t_{i,b},y^t_{i,b}) =
        \begin{cases}
      \dfrac{(\hat{y}^t_{i,b}-y^t_{i,b})^2}{2} \quad \text{if } |\hat{y}^t_{i,b}-y^t_{i,b} | \leq 1 \\
      |\hat{y}^t_{i,b}-y^t_{i,b}| - \frac{1}{2} \quad \text{otherwise.}\\
    \end{cases}
    \end{align*}
\end{linenomath}

Consider the constraint set $\mathcal{K} = \{\vect{x} \in \mathbb{R}^{d}, \|\vect{x}\|_1 \leq r\}$, where $\vect{x}$ is the model's weight, $d$ its dimension and $r=1$. The (normalized) loss incurred by the data of agent $i$ is 
$
\frac{1}{|\mathcal{B}^t_i|}\sum_{b \in \mathcal{B}^t_i}\mathcal{L}(\hat{y}^t_{i,b},y^t_{i,b}).
$
The global loss function incurred by the overall data is
\begin{align*}
F^t(\vect{x})
= \frac{1}{|\cup_{i=1}^{n} \mathcal{B}^t_i|}\sum_{b \in \cup_{i=1}^{n} \mathcal{B}^t_i}\mathcal{L}(\hat{y}^t_{i,b},y^t_{i,b}),
\end{align*}
that can be written as $F^t(\vect{x}) = \frac{1}{n} \sum_{i=1}^n f^t_i(\vect{x})$ where
$
f^t_i(\vect{x})= \frac{1}{|\mathcal{B}^t_i|} \sum_{b \in \mathcal{B}^t_i}\mathcal{L}(\hat{y}^t_{i,b},y^t_{i,b}).
$
Note that the non-convexity here is due to the non-convexity of $\hat{y}^t_{i,b}$ as a function of $\vect{x}^t_i$. 
In the following section, if not specify otherwise, we call \emph{loss} the temporal average of the global loss function $F^t$ defined as $\frac{1}{T}\sum_{t=1}^{T} F^t$.

\subsection{Prediction Performance }

Figures \ref{fig:loss-multiple-size} and \ref{fig:gap-multiple-size} show the loss and gap values for different network sizes. 
The implementation justifies our theoretical results about the convergence of the gap. Besides, we also observe the convergence of loss value, an expected implication of the gap convergence. 
We set $M$ the number of prediction points between the $21^{st}$ and $24^{th}$ of April and $n$ the number of zones within one configuration. We use the mean absolute error (MAE $=\frac{1}{nM}\sum_{i=1}^n \sum_{m=1}^{M} |\hat{y}_{i,m} - y_{i,m}| $) and mean square error (MSE $= \frac{1}{nM}\sum_{i=1}^n \sum_{m=1}^{M} \left( \hat{y}_{i,m} - y_{i,m} \right)^2 $) as a measure between the prediction and the ground truth.
We observe that increasing nodes in a network does not always lead to better online performance.
In-fact, a 7 node configuration achieves the lowest MSE (0.65) and MAE (0.78) for floors 6 and 7.
We see a 40 $\%$ drop in MSE and 20 $\%$ reduction in MAE for floor 6 zonal models when 3 extra peers from floor 7 joined the group.
We observe 19 $\%$ and 25 $\%$ increase in MSE and MAE values by adding zonal nodes from floor 7 to a 10 node group.
This can be best argued by the fact that the top floor of a building has a non identical thermal variation with the rest of the storeys. 

\begin{figure}[h]
\centering
\begin{subfigure}{0.5\textwidth}
  \centering
  \includegraphics[width=\textwidth]{./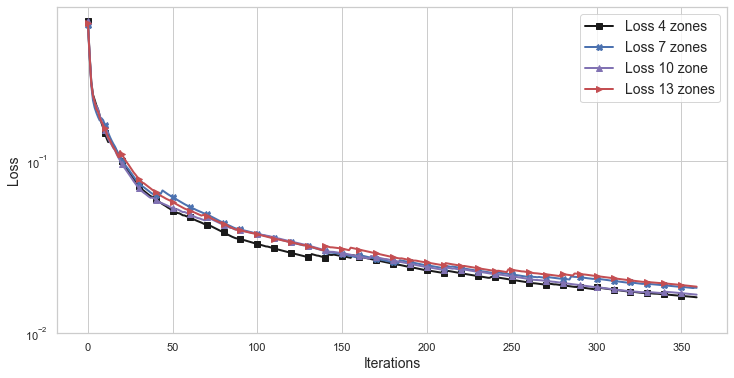}
  \caption{Loss Value}
  \label{fig:loss-multiple-size}
\end{subfigure}%
\hfill
\begin{subfigure}{.5\textwidth}
  \centering
  \includegraphics[width=\textwidth]{./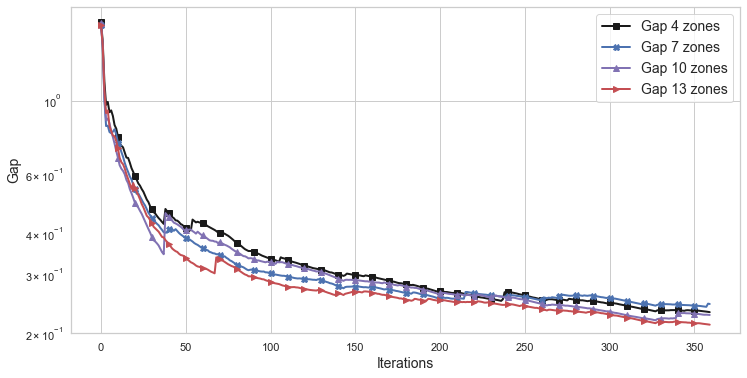}
  \caption{Gap Value}
  \label{fig:gap-multiple-size}
\end{subfigure}
\caption{Loss and Gap values of different network size on complete topology     \textit{(Plot on log-scale)}}
\label{fig:fig}
\end{figure}

\subsection{Effect of Network Topology}

We study the effect of topology in learning for a 7 node configuration with a complete, cycle and line graph containing 28, 7 and 6 edges respectively and with 13 nodes having 78,13 and 12 edges respectively.
For both 7 (Table \ref{table:temp8}) and 13 (Table \ref{table:temp13}) node configurations, we observe that the complete graph yields the least amount of prediction error, mean absolute error $\in [0.66, 1.3] \degree C$. 
However we note the peculiarity that the line graph can perform better than a cycle graph and has roughly a 10 $\%$ error margin compared to the complete configuration.

\begin{table}[h]
\centering
\begin{subtable}[t]{0.45\textwidth}
\begin{center}
\begin{tabular}{llllll}
\toprule
Topology & Metric    & Mean & Var & Max   & Min \\ 
\midrule
cycle    & MAE   & 1.09 & 0.48  & 1.80 &  0.56 \\ 
cycle    & MSE  &  0.78 & 0.21  & 1.09 &  0.52  \\ 
complete & MAE   & \textbf{0.77} & \textbf{0.38} & \textbf{1.47} & 0.27 \\ 
complete & MSE  & \textbf{0.64} & \textbf{0.20} & \textbf{1.04} & 0.39 \\ 
line     & MAE   & 0.81 & 0.53 & 1.95 & \textbf{0.24} \\ 
line     & MSE   & 0.66 & 0.28 & 1.26 & \textbf{0.34} \\ 
\bottomrule
\end{tabular}
\caption{Impact of Topology on 7 learners configuration.}
\label{table:temp8}
\end{center}
\end{subtable}
\hspace{0.8cm}
\begin{subtable}[t]{0.45\textwidth}
\begin{center}
\begin{tabular}{llllll}
\toprule
Topology & Metric    & Mean & Var & Max   & Min \\
\midrule
cycle    & MAE   & 1.51 & 1.46 & 6.16 & 0.36 \\ 
cycle    & MSE   & 0.94  & 0.38  & 1.90  & 0.48 \\ 
complete & MAE  & \textbf{1.26 }& \textbf{0.82} & 3.64 & \textbf{0.32} \\ 
complete & MSE   & \textbf{0.85} & \textbf{0.27} & \textbf{1.50} & \textbf{0.42} \\ 
line     & MAE   & 1.38 & 0.91 & 3.17 & 0.50 \\ 
line     & MSE   & 0.90 & 0.35 & \textbf{1.66} & 0.49 \\ 
\bottomrule
\end{tabular}
\caption{Impact of Topology on 13 learners configuration.}
\label{table:temp13}
\end{center}
\end{subtable}
\caption{Temperature forecasting performances on different network topologies}
\label{tab:main}
\end{table}

\subsection{Effect of Decentralization}

\begin{figure}[h]
    \centering
    \includegraphics[scale=0.4]{./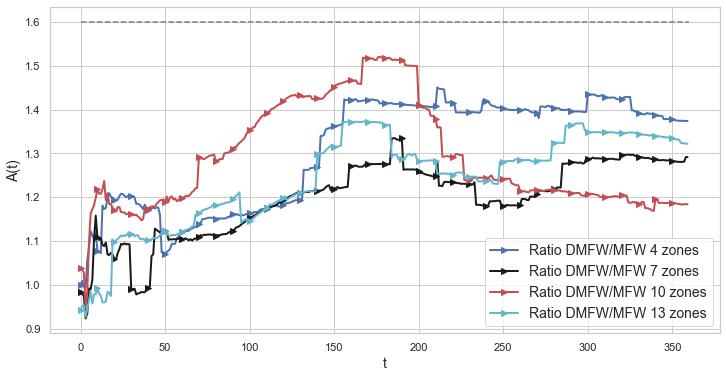}
    \caption{Loss ratio of decentralized and centralized Meta Frank-Wolfe on different network size.}
    \label{fig:ratio}
\end{figure}

We are interested in understanding the role of decentralization in terms of accuracy of zonal learners.
Let $L_{MFW}(t)$ be the loss from Meta Frank Wolfe (MFW) at time $t$.
The approximation ratio $A(t) = \frac{L_{DMFW}(t) }{L_{MFW}(t)}$ at time $t$ represents how worse is our decentralized version compared to a centralized optimization. 
$A(t) \leq B_{max}$ will mean our algorithm performs no worse than $B_{max}$ times of the MFW.
On figure \ref{fig:ratio}, we plot the ratio $A(t)$ for a 13 node network and show that $A(t) \leq 1.4$. 
The 7 node network has the closest approximation bounded by $1.35$ which can be explained by earlier insights on performance accuracy.  
We notice that the 10 node network performs worse till $t = 200$ and after $t \geq 250 $ or 21 hours, the approximation ratio becomes close to centralised version with less than 20 $\%$ error.

\section{Concluding remarks}
\label{chap:conclusion}

We proposed an online algorithm minimizing non-convex loss functions aggregated from local data distributed over a network. We showed the bounds of the  convergence gap in both exact and stochastic gradient settings. 
In complement to the theoretical analysis, we run experiments on a real-life smart building data-set.
The results make our offerings valuable for learning in distributed settings.

\newpage

\bibliographystyle{unsrt}  
\bibliography{ref}  

\appendix

\section*{Supplementary File for Decentralized Meta Frank-Wolfe for Online Non-Convex Optimization} 

\section{Proof of \Cref{thm:gap}} 


\setcounter{lemma}{0}
\begin{lemma}[\cite{WaiLafond17:Decentralized-Frank--Wolfe}, Lemmas 1 and 2]	
\label{lem:convergence}
Assume that functions $f^{t}_{j}$'s are $\beta$-smooth, G-Lipschitz that is, $\| \nabla f^{t}_{j}\|  \leq G$
for every $1 \leq t \leq T$ and every $1 \leq j \leq n$ and the diameter of $\mathcal{K}$ is $D$.
Then, there exists a constant $\ell_{0}$ such that for every $1 \leq \ell \leq L+1$, 
\begin{linenomath}
\begin{align*}
	\Delta p_{\ell} := \max_{t=1}^{T} \max_{i = 1}^{n} \| \vect{y}^{t}_{i,\ell} - \overline{\vect{x}}^{t}_{\ell} \| 
		&\leq \frac{C_{p}}{\ell} 
\end{align*}
\begin{align*}
	\Delta d_{\ell} := \max_{t=1}^{T} \max_{i = 1}^{n} \| \vect{d}^{t}_{i,\ell} - \frac{1}{n}\sum_{j=1}^n \nabla f^t_j (\vect{y}^t_{j,\ell})  \| 
		&\leq \frac{C_{d}}{\ell}
\end{align*}
\end{linenomath}
where 
$C_{p}=\ell_{0}\sqrt{n} D$ and $C_{d}=\sqrt{n} \cdot \max $ $\left \{ 2 \left ( \frac{\ell_{0} \sqrt{n} D}{\ell} +D \right )\beta ;
|\lambda_{2}(W)| \ell_{0} \left ( \frac{\beta D}{1 - |\lambda_{2}(W)|}+ G \right) \right\}
$ where $\lambda_2(W)$ is the second largest eigenvalue of $W$.
\end{lemma}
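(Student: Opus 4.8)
The plan is to pass to a stacked (matrix) representation of the per-node quantities and exploit the contractivity of the mixing matrix $W$ on the consensus-orthogonal subspace. I would collect the local primal iterates at inner step $\ell$ into a matrix $\vect{Y}_\ell$ whose $i$-th row is $\vect{y}^{t}_{i,\ell}$, and similarly stack the tracking variables into $\vect{D}_\ell$. Writing $J = \frac{1}{n}\mathbf{1}\mathbf{1}^\top$ for the averaging projector, the quantities to control are exactly the consensus residual $(I-J)\vect{Y}_\ell$ and the tracking residual $(I-J)\vect{D}_\ell$, under the identifications $\overline{\vect{x}}^{t}_{\ell} = J\vect{Y}_\ell$ and $\frac{1}{n}\sum_j \nabla f^t_j(\vect{y}^t_{j,\ell})$ = the average row of the stacked gradients. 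Since $W$ is doubly stochastic, $WJ = JW = J$, and the spectral norm of $(I-J)W$ restricted to the range of $I-J$ equals $|\lambda_2(W)|$; this is the single structural fact driving both contractions.

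For $\Delta p_\ell$, I would write the update of $\vect{Y}$ as one gossip step $W\vect{Y}_\ell$ plus a Frank--Wolfe correction of magnitude $\gamma_\ell$ toward a vertex of $\mathcal{K}$. Applying $I-J$ and using $(I-J)W = (I-J)W(I-J)$ yields a recursion of the form $\norm{(I-J)\vect{Y}_{\ell+1}} \le |\lambda_2(W)|\,\norm{(I-J)\vect{Y}_\ell} + \gamma_\ell\,\kappa$, where $\kappa = O(\sqrt{n}\,D)$ bounds the Frank--Wolfe increment through the diameter of $\mathcal{K}$ (the $\sqrt{n}$ accounting for the passage between the Frobenius norm of the stack and the per-node maximum). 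With $\gamma_\ell = \Theta(1/\ell)$ I would then prove $\norm{(I-J)\vect{Y}_\ell} \le C_p/\ell$ by induction: assuming the bound at $\ell$, the step requires $|\lambda_2(W)|\,C_p + \kappa \le C_p\bigl(1 - \frac{1}{\ell+1}\bigr)$, which holds for all $\ell \ge \ell_0$ once $C_p \ge \kappa/(1-|\lambda_2(W)|)$. This is precisely where the factor $1/(1-|\lambda_2(W)|)$ and the threshold $\ell_0$ (handling the transient small-$\ell$ regime) enter, delivering $C_p = \ell_0\sqrt{n}\,D$.

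For $\Delta d_\ell$, the same contraction applies, but the forcing term is now the variation of the local gradients between consecutive inner steps. Using $\beta$-smoothness, $\norm{\nabla f^t_i(\vect{y}^t_{i,\ell+1}) - \nabla f^t_i(\vect{y}^t_{i,\ell})} \le \beta\,\norm{\vect{y}^t_{i,\ell+1}-\vect{y}^t_{i,\ell}}$, and the per-step displacement of $\vect{y}$ is in turn controlled by the gossip move (bounded by the already-established $\Delta p_\ell \le C_p/\ell$) together with the Frank--Wolfe step $\gamma_\ell D$. Feeding this into the tracking recursion $\norm{(I-J)\vect{D}_{\ell+1}} \le |\lambda_2(W)|\,\norm{(I-J)\vect{D}_\ell} + \beta(\Delta p_\ell + \gamma_\ell D)$ and solving as before produces the two-term bound: the branch $2\bigl(\frac{\ell_0\sqrt{n}D}{\ell}+D\bigr)\beta$ of the maximum is the smoothness-driven forcing, while the branch carrying $|\lambda_2(W)|\,\ell_0\bigl(\frac{\beta D}{1-|\lambda_2(W)|}+G\bigr)$ comes from the homogeneous (contraction) part seeded by the initial gradient magnitude $G$ and the geometric summation $\sum_{k\le \ell}|\lambda_2(W)|^{\ell-k}\gamma_k$.

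The main obstacle is the coupling: the tracking bound genuinely depends on the primal consensus bound, so the two estimates must be established in the right order and the constants threaded consistently through both inductions. The delicate point in each induction is showing $\sum_{k=1}^{\ell}|\lambda_2(W)|^{\ell-k}\gamma_k = O(1/\ell)$ — that a $1/\ell$-weighted geometric convolution stays $O(1/\ell)$ — and choosing $\ell_0$ large enough that the inductive inequality $|\lambda_2(W)|\,C + \kappa \le C(1-\frac{1}{\ell+1})$ holds down to the base case. Pinning down the exact max-form of $C_d$ then reduces to bookkeeping which of the forcing and homogeneous contributions dominates.
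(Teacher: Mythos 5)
The paper does not prove this lemma at all: it is imported verbatim as Lemmas 1 and 2 of the cited reference \cite{WaiLafond17:Decentralized-Frank--Wolfe}, so there is no in-paper proof to compare against. Your outline is precisely the standard argument behind those cited lemmas --- stack the iterates, project onto the consensus-orthogonal subspace, use the $|\lambda_2(W)|$-contraction of $W$ there, and run an induction in which the forcing term is the $O(1/\ell)$ Frank--Wolfe step (for $\Delta p_\ell$) and the $\beta$-smoothness-controlled gradient increment (for $\Delta d_\ell$, which must be established after and on top of the $\Delta p_\ell$ bound) --- so the approach is sound and matches the source. Note only that what you have written is a plan rather than a proof: the two steps you yourself flag as delicate (that $\sum_{k\le \ell}|\lambda_2(W)|^{\ell-k}/k = O(1/\ell)$, and the choice of $\ell_0$ making the inductive inequality close, which is where the hidden $1/(1-|\lambda_2(W)|)$ dependence is absorbed into $C_p=\ell_0\sqrt{n}D$) are exactly where all the quantitative content lies and would still need to be carried out to recover the stated constants, in particular the two-branch max defining $C_d$.
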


\setcounter{lemma}{2}
\begin{lemma}
\label{lmm:avg}
For every $1 \leq t \leq T$ and $1 \leq \ell \leq L$, it holds that 
\begin{equation}
  \overline{\vect{x}}^t_{\ell+1} - \overline{\vect{x}}^t_{\ell} = \eta_{\ell} \left( \frac{1}{n}\sum_{i=1}^{n} \vect{v}^t_{i,\ell} - \overline{\vect{x}}^t_{\ell}\right)
\end{equation}
\end{lemma}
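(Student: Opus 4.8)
The plan is to unfold the per-node update rule that the algorithm uses to produce $\vect{y}^t_{i,\ell+1}$ and then average it across the $n$ nodes, exploiting that the mixing matrix $W$ is doubly stochastic so that consensus averaging leaves the network mean $\overline{\vect{x}}^t_\ell = \frac{1}{n}\sum_{i=1}^n \vect{y}^t_{i,\ell}$ invariant. Recall that the decentralized Frank--Wolfe update has the form $\vect{y}^t_{i,\ell+1} = \sum_{j=1}^n W_{ij}\vect{y}^t_{j,\ell} + \eta_\ell\bigl(\vect{v}^t_{i,\ell} - \sum_{j=1}^n W_{ij}\vect{y}^t_{j,\ell}\bigr) = (1-\eta_\ell)\sum_{j=1}^n W_{ij}\vect{y}^t_{j,\ell} + \eta_\ell \vect{v}^t_{i,\ell}$, i.e. each node first mixes its neighbors' iterates and then takes a convex step of length $\eta_\ell$ toward its local linear-minimization vertex $\vect{v}^t_{i,\ell}$.

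First I would sum this identity over $i = 1,\dots,n$ and divide by $n$, using linearity of the average to split it into a mixing term and a direction term. The direction term is immediately $\eta_\ell \frac{1}{n}\sum_{i=1}^n \vect{v}^t_{i,\ell}$. The substantive step is the mixing term: interchanging the order of the two finite sums gives $\frac{1}{n}\sum_{j=1}^n\left(\sum_{i=1}^n W_{ij}\right)\vect{y}^t_{j,\ell}$, and since $W$ is column-stochastic ($\sum_{i=1}^n W_{ij} = 1$ for every $j$) the inner sum is $1$, so the term collapses to $\frac{1}{n}\sum_{j=1}^n \vect{y}^t_{j,\ell} = \overline{\vect{x}}^t_\ell$. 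This yields $\overline{\vect{x}}^t_{\ell+1} = (1-\eta_\ell)\,\overline{\vect{x}}^t_\ell + \eta_\ell\frac{1}{n}\sum_{i=1}^n \vect{v}^t_{i,\ell}$.

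Finally, I would subtract $\overline{\vect{x}}^t_\ell$ from both sides and factor out $\eta_\ell$ to obtain exactly the claimed identity. The argument is pointwise in $t$ and $\ell$, holding verbatim for each round $1 \le t \le T$ and each inner iteration $1 \le \ell \le L$, so no induction is required.

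The only real obstacle is a bookkeeping one: ensuring that the averaging operator induced by $W$ genuinely preserves the mean, which rests entirely on double stochasticity through the column-sum condition $\sum_{i} W_{ij} = 1$. Were $W$ merely row-stochastic the mixing term would not collapse and the clean recursion would fail; everything else is linearity of the average together with a rearrangement of a finite double sum.
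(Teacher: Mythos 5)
Your proof is correct and follows essentially the same route as the paper: average the combined mixing-plus-convex-combination update over the $n$ agents, interchange the double sum, and collapse the mixing term via the column-stochasticity condition $\sum_{i} W_{ij}=1$. The only (harmless) difference is notational — the paper writes the recursion in terms of $\vect{x}^t_{i,\ell+1}=(1-\eta_\ell)\vect{y}^t_{i,\ell}+\eta_\ell\vect{v}^t_{i,\ell}$ with $\vect{y}^t_{i,\ell}=\sum_j W_{ij}\vect{x}^t_{j,\ell}$ and defines $\overline{\vect{x}}^t_\ell$ as the mean of the $\vect{x}$'s, whereas you phrase everything in terms of the $\vect{y}$ iterates, whose mean coincides with $\overline{\vect{x}}^t_\ell$ by the same column-sum property.
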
 

\begin{proof}
\begin{linenomath}
\begin{align*}
\overline{\vect{x}}_{\ell+1}^t 
&= \frac{1}{n} \sum_{i=1}^{n} \vect{x}^{t}_{i,\ell + 1} \tag{Definition of $\overline{\vect{x}}_{\ell+1}^t$}
\\
&= \frac{1}{n} \sum_{i=1}^{n} \left ((1-\eta_{\ell}) \vect{y}^{t}_{i,\ell} + \eta_{\ell} \vect{v}_{i,\ell}^t \right )  \tag{Definition of $\vect{x}_{i,\ell}^t$} 
\\
&= \frac{1}{n} \sum_{i=1}^{n} \left [ (1-\eta_{\ell}) \left ( \sum_{j=1}^n \vect{W}_{ij} \vect{x}_{j,\ell}^t  \right ) + \eta_{\ell} \vect{v}_{i,\ell}^t \right ] \tag{Definition of $\vect{y}_{i,\ell}^t$}
\\
&= (1 - \eta_{\ell}) \frac{1}{n} \sum_{i=1}^n \left [ \sum_{j=1}^n \vect{W}_{ij}\vect{x}_{j,\ell}^t \right ] + \frac{1}{n}\eta_{\ell} \sum_{i=1}^n \vect{v}_{i,\ell}^t
\\
&= (1 - \eta_{\ell}) \frac{1}{n} \sum_{j=1}^n \left [ \vect{x}_{j,\ell}^t \sum_{i=1}^n \vect{W}_{ij} \right ] + \frac{1}{n} \eta_{\ell} \sum_{i=1}^n \vect{v}_{i,\ell}^t
\\
&= (1 - \eta_{\ell}) \frac{1}{n} \sum_{j=1}^n \vect{x}_{j,\ell}^t + \frac{1}{n} \eta_{\ell} \sum_{i=1}^n \vect{v}_{i,\ell}^t \tag{$\sum_{i=1}^n W_{ij} =1$ for every $j$}
\\
&= (1 - \eta_{\ell}) \overline{\vect{x}}_{\ell}^t + \frac{1}{n} \eta_{\ell} \sum_{i=1}^n \vect{v}_{i,\ell}^t
\\
&= \overline{\vect{x}}_{\ell}^t + \eta_{\ell} \left ( \frac{1}{n} \sum_{i=1}^n \vect{v}_{i,\ell}^t - \overline{\vect{x}}_{\ell}^t \right )
\end{align*}
\end{linenomath}
where we use a property of $W$ which is $\sum_{i = 1}^{n} W_{ij} = 1$ for every $j$.
\end{proof}

\setcounter{lemma}{1}

\begin{lemma}
\label{lmm:final_step}
For every $1 \leq i \leq n$ and $1 \leq \ell \leq L$, it holds that
\begin{align*}
    \max_{\vect{o} \in \mathcal{K}}\langle \nabla F^{t} (\vect{x}^t_{i,\ell}), \vect{x}^t_{i,\ell} - \vect{o} \rangle 
     \leq \max_{\vect{o} \in \mathcal{K}}\langle \nabla F^{t} (\overline{\vect{x}}^t_{\ell}), \overline{\vect{x}}^t_{\ell} - \vect{o} \rangle
    + \left(\beta D + G \right)C_p \frac{\log L}{L}.
\end{align*}
\end{lemma}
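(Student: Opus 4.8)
The plan is to treat the Frank--Wolfe gap $g^t(\vect{x}) := \max_{\vect{o}\in\mathcal{K}}\langle \nabla F^t(\vect{x}), \vect{x}-\vect{o}\rangle$ as an approximately Lipschitz function of the evaluation point, and then to charge the difference between the local iterate $\vect{x}^t_{i,\ell}$ and the network average $\overline{\vect{x}}^t_\ell$ to the consensus error controlled by \Cref{lem:convergence}. So the argument splits into two independent pieces: a deterministic ``gap is Lipschitz in the point'' inequality, and a bound on the disagreement $\| \vect{x}^t_{i,\ell}-\overline{\vect{x}}^t_\ell \|$.

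First I would establish the perturbation inequality $g^t(\vect{x}^t_{i,\ell}) \le g^t(\overline{\vect{x}}^t_\ell) + (\beta D + G)\,\| \vect{x}^t_{i,\ell}-\overline{\vect{x}}^t_\ell \|$. To handle the two maxima cleanly, let $\vect{o}^\star \in \mathcal{K}$ attain the maximum defining the left-hand side; the maximum on the right-hand side is then at least $\langle \nabla F^t(\overline{\vect{x}}^t_\ell), \overline{\vect{x}}^t_\ell - \vect{o}^\star\rangle$, so it suffices to upper bound the gap evaluated at the common point $\vect{o}^\star$. Adding and subtracting $\langle \nabla F^t(\overline{\vect{x}}^t_\ell), \vect{x}^t_{i,\ell}-\vect{o}^\star\rangle$, the difference is at most
\begin{align*}
\langle \nabla F^t(\vect{x}^t_{i,\ell}) - \nabla F^t(\overline{\vect{x}}^t_\ell),\; \vect{x}^t_{i,\ell}-\vect{o}^\star\rangle
+ \langle \nabla F^t(\overline{\vect{x}}^t_\ell),\; \vect{x}^t_{i,\ell}-\overline{\vect{x}}^t_\ell\rangle .
\end{align*}
I would bound the first inner product by $\beta D \,\| \vect{x}^t_{i,\ell}-\overline{\vect{x}}^t_\ell \|$, combining Cauchy--Schwarz, $\beta$-smoothness of $F^t$, and $\| \vect{x}^t_{i,\ell}-\vect{o}^\star \|\le D$; and the second by $G\,\| \vect{x}^t_{i,\ell}-\overline{\vect{x}}^t_\ell \|$ using Cauchy--Schwarz and the bounded-gradient ($G$-Lipschitz) hypothesis. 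Summing the two contributions produces exactly the constant $\beta D + G$ multiplying the disagreement.

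It then remains to show $\| \vect{x}^t_{i,\ell}-\overline{\vect{x}}^t_\ell \| \le C_p \frac{\log L}{L}$, and this is where the main difficulty lies. The estimate supplied directly by \Cref{lem:convergence} is $\Delta p_\ell \le C_p/\ell$, which (i) is stated for the mixed iterate $\vect{y}^t_{i,\ell}$ rather than for $\vect{x}^t_{i,\ell}$, and (ii) is too weak for small $\ell$ to give a single $\ell$-uniform bound of order $\log L / L$. To bridge this gap I would expand $\vect{x}^t_{i,\ell}-\overline{\vect{x}}^t_\ell$ through the update rule, using \Cref{lmm:avg} for the averaged trajectory and $\vect{y}^t_{i,\ell}=\sum_j \vect{W}_{ij}\vect{x}^t_{j,\ell}$ (together with double stochasticity, which gives $\tfrac1n\sum_i \vect{y}^t_{i,\ell}=\overline{\vect{x}}^t_\ell$) for the mixed one, writing the disagreement as a step-size-weighted accumulation of per-round errors that is contracted at each step by the mixing matrix $W$. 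The obstacle is precisely this accumulation estimate: one must show that, under the contraction governed by $\lambda_2(W)$ (already visible in the constant $C_d$), the weighted sum of the per-round $C_p/\ell$ terms telescopes to the harmonic-type quantity $\tfrac1L\sum_{\ell}1/\ell \approx \log L / L$ rather than degrading to $O(1)$, thereby retaining the crucial $1/L$ normalization. Once that deviation bound is in hand, substituting it into the perturbation inequality from the previous paragraph closes the proof.
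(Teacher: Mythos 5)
Your first step is exactly the paper's argument: the paper decomposes $\langle \nabla F^{t}(\vect{x}^t_{i,\ell}), \vect{x}^t_{i,\ell}-\vect{o}\rangle$ into the gap at $\overline{\vect{x}}^t_\ell$ plus $\langle \nabla F^{t}(\vect{x}^t_{i,\ell})-\nabla F^{t}(\overline{\vect{x}}^t_\ell), \overline{\vect{x}}^t_\ell-\vect{o}\rangle + \langle \nabla F^{t}(\vect{x}^t_{i,\ell}), \vect{x}^t_{i,\ell}-\overline{\vect{x}}^t_\ell\rangle$ and bounds the two error terms by $(\beta D+G)\|\vect{x}^t_{i,\ell}-\overline{\vect{x}}^t_\ell\|$ via Cauchy--Schwarz, $\beta$-smoothness and the gradient bound. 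Your grouping of the cross terms is a trivially equivalent variant, and your handling of the two maxima (fixing the maximizer of the left-hand side) is the clean way to justify passing from the pointwise identity to the inequality between maxima. That half is fine.

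The problem is the second half, which you have not closed: you explicitly leave the bound $\|\vect{x}^t_{i,\ell}-\overline{\vect{x}}^t_\ell\| \le C_p \log L / L$ as an ``obstacle,'' so the proof is incomplete as written. Moreover, the route you sketch cannot succeed: no contraction or telescoping argument can yield an $\ell$-uniform bound of order $\log L/L$, because at $\ell=1$ the local iterates are initialized arbitrarily and no mixing has occurred, so $\|\vect{x}^t_{i,1}-\overline{\vect{x}}^t_1\|$ can be of order $D$ (i.e.\ of order $C_p$), which exceeds $C_p\log L/L$ for large $L$. You are right to be suspicious of this step; the paper's own proof simply asserts $(\beta D+G)\E\|\vect{x}^t_{i,\ell}-\overline{\vect{x}}^t_\ell\| \le (\beta D+G)C_p\log L/L$ in its last line with no justification, and as a per-$\ell$ statement it is false. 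The provable statement is the per-$\ell$ bound $(\beta D+G)C_p/\ell$, coming from the consensus estimate of \Cref{lem:convergence} (which the paper also applies to the $\vect{x}$-iterates, cf.\ the use of $\|\vect{x}^t_{j,\ell}-\overline{\vect{x}}^t_\ell\|\le C_p/\ell$ in \Cref{clm:B}); the factor $\log L/L$ should appear only after the $\frac{1}{L}\sum_{\ell=1}^{L}$ average taken in the proof of \Cref{thm:gap}, via $\frac{1}{L}\sum_{\ell=1}^{L}\ell^{-1} = O(\log L/L)$, which is all that the theorem actually needs. The correct repair is therefore to restate the lemma with $C_p/\ell$ on the right-hand side (or in averaged-over-$\ell$ form), not to pursue the $\ell$-uniform bound you set up.
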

\begin{proof} 
Fix $1 \leq i \leq n$ and $1 \leq \ell \leq L$. We have
    \begin{align*}
        \langle \nabla F^{t} (\vect{x}^t_{i,\ell}), \vect{x}^t_{i,\ell} - \vect{o} \rangle 
        & = \langle \nabla F^{t} (\overline{\vect{x}}^t_{\ell}), \overline{\vect{x}}^t_{\ell} - \vect{o} \rangle + \langle \nabla F^{t} (\vect{x}^t_{i,\ell}) - \nabla F^{t} (\overline{\vect{x}}^t_\ell), \overline{\vect{x}}^t_\ell - \vect{o} \rangle + \langle \nabla F^{t} (\vect{x}^t_{i,\ell}), \vect{x}^t_{i,\ell} - \overline{\vect{x}}^t_\ell \rangle
    \end{align*}
Therefore,
    \begin{align*}
        \max_{\vect{o} \in \mathcal{K}}\langle \nabla F^{t} (\vect{x}^t_{i,\ell}), \vect{x}^t_{i,\ell} - \vect{o} \rangle 
        & \leq \max_{\vect{o} \in \mathcal{K}}\langle \nabla F^{t} (\overline{\vect{x}}^t_\ell), \overline{\vect{x}}^t_\ell - \vect{o} \rangle + \max_{\vect{o} \in \mathcal{K}} \langle \nabla F (\vect{x}^t_{i,\ell}) - \nabla F^{t} (\overline{\vect{x}}^t_\ell), \overline{\vect{x}}^t_\ell - \vect{o} \rangle \\
        	& \qquad + \langle \nabla F^{t} (\vect{x}^t_{i,\ell}), \vect{x}^t_{i,\ell} - \overline{\vect{x}}^t_\ell \rangle \\
        & \leq \max_{\vect{o} \in \mathcal{K}}\langle \nabla F^{t} (\overline{\vect{x}}^t_\ell), \overline{\vect{x}}^t_\ell - \vect{o} \rangle  + (\beta D + G) \E{\|\vect{x}^t_{i,\ell} - \overline{\vect{x}}^t_\ell\|} \\
        & \leq \max_{\vect{o} \in \mathcal{K}}\langle \nabla F^{t} (\overline{\vect{x}}^t_\ell), \overline{\vect{x}}^t_\ell - \vect{o} \rangle + (\beta D + G)C_p \frac{\log L}{L}.
    \end{align*}
\end{proof}

\setcounter{theorem}{0}

\begin{theorem}
\label{thm:gap}
Let $\mathcal{K}$ be a convex set with diameter D. Assume that functions $F^{t}$ (possibly non convex) are $\beta$-smooth and G-Lipschitz for every t. With the choice of step size $\eta_{\ell} = \min\left(1, \frac{A}{\ell^{\alpha}}\right)$ where $A \in \mathbb{R_{+}}$ and $\alpha \in (0,1)$. 
Then, Algorithm 1 guarantees that for all $1 \leq i \leq n$:
    \begin{align*}
       \max_{\vect{o} \in \mathcal{K}}\frac{1}{T} \sum_{t=1}^{T} \E_{\vect{x}^t_i} \bigl [\langle \nabla F^{t}(\vect{x}^t_{i}), \vect{x}^t_{i} - \vect{o}\rangle \bigr] \nonumber 
       & \leq \frac{GDA^{-1}}{L^{1-\alpha}}  
         + \frac{AD^2 \beta/2}{L^{\alpha}(1-\alpha)} + O \left(\mathcal{R}^{T}\right) \notag \\
        & \quad + \left(\left( \beta C_p + C_d \right)D + \left(\beta D + G \right)C_p \right)\frac{\log L}{L}
    \end{align*}
where $\mathcal{R}^T$ is the regret of online linear minimization oracles
and 
$C_{p}=\ell_{0}\sqrt{n} D$ and $C_{d}=\sqrt{n} \cdot \max $ $\left \{ 2 \left ( \frac{\ell_{0} \sqrt{n} D}{\ell} +D \right )\beta ;
|\lambda_{2}(W)| \ell_{0} \left ( \frac{\beta D}{1 - |\lambda_{2}(W)|}+ G \right) \right\}
$ where $\lambda_2(W)$ is the second largest eigenvalue of matrix $W$ 
($C_p$ and $C_d$ are already defined in \Cref{lem:convergence}).

Choosing $L=T$, $\alpha = 1/2$ and oracles as gradient descent or follow-the-perturbed-leader with regret $\mathcal{R}^T =
O\bigl(T^{-1/2}\bigr)$, we obtain the gap convergence rate of $O\bigl(T^{-1/2}\bigr)$.
\end{theorem}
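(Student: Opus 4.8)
The plan is to push the whole estimate onto the averaged iterate $\overline{\vect{x}}^t_\ell$, run a standard Frank--Wolfe descent step there, and then pay separately for (i) the consensus and gradient-tracking gaps via \Cref{lem:convergence}, (ii) the substitution of the oracle's surrogate direction $\vect{d}^t_{i,\ell}$ for the true gradient, and (iii) the online character of the oracles via their regret $\mathcal{R}^T$. Since the reported iterate $\vect{x}^t_i$ is a uniformly random one among the inner iterates $\{\vect{x}^t_{i,\ell}\}_{\ell=1}^L$, the first move is to write
\[
\E_{\vect{x}^t_i}\bigl[\langle \nabla F^{t}(\vect{x}^t_{i}),\vect{x}^t_{i}-\vect{o}\rangle\bigr]=\frac{1}{L}\sum_{\ell=1}^{L}\langle \nabla F^{t}(\vect{x}^t_{i,\ell}),\vect{x}^t_{i,\ell}-\vect{o}\rangle,
\]
and then to apply \Cref{lmm:final_step} termwise, passing from each per-node iterate to the average at the cost of the additive $(\beta D+G)C_p\frac{\log L}{L}$.

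Next I would establish a one-step inequality on the averaged sequence. Using $\beta$-smoothness of $F^t$ together with \Cref{lmm:avg} and $\norm{\tfrac1n\sum_i\vect{v}^t_{i,\ell}-\overline{\vect{x}}^t_\ell}\le D$ gives
\[
F^t(\overline{\vect{x}}^t_{\ell+1})\le F^t(\overline{\vect{x}}^t_{\ell})+\eta_\ell\Bigl\langle \nabla F^t(\overline{\vect{x}}^t_\ell),\tfrac1n\textstyle\sum_{i=1}^n\vect{v}^t_{i,\ell}-\overline{\vect{x}}^t_\ell\Bigr\rangle+\tfrac{\beta\eta_\ell^2 D^2}{2}.
\]
Splitting $\tfrac1n\sum_i\vect{v}^t_{i,\ell}-\overline{\vect{x}}^t_\ell=(\vect{o}-\overline{\vect{x}}^t_\ell)+(\tfrac1n\sum_i\vect{v}^t_{i,\ell}-\vect{o})$ isolates the gap $\langle\nabla F^t(\overline{\vect{x}}^t_\ell),\overline{\vect{x}}^t_\ell-\vect{o}\rangle$, so after dividing by $\eta_\ell$,
\[
\langle\nabla F^t(\overline{\vect{x}}^t_\ell),\overline{\vect{x}}^t_\ell-\vect{o}\rangle\le \tfrac{1}{\eta_\ell}\bigl[F^t(\overline{\vect{x}}^t_\ell)-F^t(\overline{\vect{x}}^t_{\ell+1})\bigr]+\tfrac{\beta\eta_\ell D^2}{2}+\tfrac1n\textstyle\sum_{i=1}^n\langle\nabla F^t(\overline{\vect{x}}^t_\ell),\vect{v}^t_{i,\ell}-\vect{o}\rangle.
\]
In the last sum I would replace $\nabla F^t(\overline{\vect{x}}^t_\ell)$ by $\vect{d}^t_{i,\ell}$: writing $F^t=\tfrac1n\sum_j f^t_j$, $\beta$-smoothness of each $f^t_j$ bounds $\norm{\nabla F^t(\overline{\vect{x}}^t_\ell)-\tfrac1n\sum_j\nabla f^t_j(\vect{y}^t_{j,\ell})}\le\beta\,\Delta p_\ell$, while $\Delta d_\ell$ controls the tracking error, so by \Cref{lem:convergence} the substitution costs at most $(\beta C_p+C_d)D/\ell$ per term and leaves the oracle inner product $\tfrac1n\sum_i\langle\vect{d}^t_{i,\ell},\vect{v}^t_{i,\ell}-\vect{o}\rangle$.

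I would then average uniformly over $\ell=1,\dots,L$ and over $t=1,\dots,T$. The curvature terms give $\tfrac{\beta D^2}{2L}\sum_\ell\eta_\ell=\tfrac{AD^2\beta/2}{L^\alpha(1-\alpha)}$ after the integral estimate of $\sum_\ell\ell^{-\alpha}$; the substitution error gives $\tfrac{(\beta C_p+C_d)D}{L}\sum_\ell\tfrac1\ell=(\beta C_p+C_d)D\,\tfrac{\log L}{L}$ from the harmonic sum; and for each fixed slot $\ell$ and node $i$ the online oracle run on the loss sequence $\langle\vect{d}^t_{i,\ell},\cdot\rangle$ guarantees $\tfrac1T\sum_t\langle\vect{d}^t_{i,\ell},\vect{v}^t_{i,\ell}-\vect{o}\rangle\le\mathcal{R}^T$ \emph{uniformly} in $\vect{o}\in\mathcal{K}$ (the comparator only lower-bounds the offline optimum), which yields the $O(\mathcal{R}^T)$ term and simultaneously justifies pulling $\max_{\vect{o}}$ outside at the very end. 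Substituting $L=T$, $\alpha=1/2$ and $\mathcal{R}^T=O(T^{-1/2})$ then collapses every term to $O(T^{-1/2})$.

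The main obstacle is the remaining telescoping term $\tfrac1L\sum_\ell\tfrac1{\eta_\ell}\bigl[F^t(\overline{\vect{x}}^t_\ell)-F^t(\overline{\vect{x}}^t_{\ell+1})\bigr]$: because $F^t$ is non-convex the consecutive differences are unsigned, so $1/\eta_\ell$ cannot be bounded termwise and a naive telescope fails. The clean route is summation by parts, using that $\ell\mapsto1/\eta_\ell$ is nondecreasing and that, after centering at $F^t(\overline{\vect{x}}^t_{L+1})$, the values vary by at most $GD$ (by $G$-Lipschitzness and diameter $D$); the boundary and difference contributions collapse to $GD/\eta_L$, so the uniform average is $\tfrac{1}{L}\cdot\tfrac{GD}{\eta_L}=\tfrac{GDA^{-1}}{L^{1-\alpha}}$, matching the leading term. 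Lining up the constants in this Abel-summation step is the part that needs the most care.
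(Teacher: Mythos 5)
Your proposal follows essentially the same route as the paper's proof: $\beta$-smoothness applied to the averaged iterate $\overline{\vect{x}}^t_\ell$ combined with \Cref{lmm:avg}, the $(\beta C_p + C_d)D/\ell$ substitution cost for replacing $\nabla F^t(\overline{\vect{x}}^t_\ell)$ by $\vect{d}^t_{i,\ell}$ via \Cref{lem:convergence}, the oracle regret absorbing the $\langle \vect{d}^t_{i,\ell}, \vect{v}^t_{i,\ell}-\vect{o}\rangle$ terms, \Cref{lmm:final_step} to pass back to the per-node iterates at cost $(\beta D + G)C_p \log L / L$, and Jensen's inequality to pull the max outside. The one point where you genuinely sharpen the argument is the telescoping term: the paper bounds $\frac{1}{\eta_\ell}\bigl(F^t(\overline{\vect{x}}^t_\ell)-F^t(\overline{\vect{x}}^t_{\ell+1})\bigr)$ termwise by $\frac{L^\alpha}{A}\bigl(F^t(\overline{\vect{x}}^t_\ell)-F^t(\overline{\vect{x}}^t_{\ell+1})\bigr)$, which is not justified when a difference is negative (as it can be for non-convex $F^t$), whereas your Abel-summation step with the nondecreasing weights $1/\eta_\ell$ and the $GD$ bound on the centered values is rigorous and recovers the same leading term $GDA^{-1}/L^{1-\alpha}$.
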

\begin{proof}
By $\beta$-smoothness, $\forall \ell \in \{1, \cdots, L\}$: 
\begin{align}	\label{tk:smth}
    F^{t}\left( \overline{\vect{x}}^{t}_{\ell+1} \right) - F^{t} \left( \overline{\vect{x}}^{t}_{\ell} \right) 
    &\leq \langle \nabla F^{t} \left( \overline{\vect{x}}^t_{\ell} \right), \overline{\vect{x}}^t_{\ell+1} - \overline{\vect{x}}^t_{\ell} \rangle 
    	+ \frac{\beta}{2} \left \| \overline{\vect{x}}^t_{\ell+1} - \overline{\vect{x}}^t_{\ell} \right\|^{2}
\end{align} 
Using Lemma~\ref{lmm:avg}, the inner product in~(\ref{tk:smth}) can be re-written as : 
    \begin{align}	\label{tk:inner_beta}
        \left \langle \nabla F^{t} \left( \overline{\vect{x}}^t_{\ell} \right), \overline{\vect{x}}^t_{\ell+1} - \overline{\vect{x}}^t_{\ell} \right \rangle 	\notag 
        &= \eta_{\ell} \left \langle \nabla F^{t} \left( \overline{\vect{x}}^t_{\ell} \right), \frac{1}{n}\sum_{i=1}^{n} \vect{v}^t_{i,\ell} - \overline{\vect{x}}^t_{\ell} \right \rangle 	\notag \\
        &= \eta_{\ell} \left \langle \nabla F^{t} \left( \overline{\vect{x}}^t_{\ell} \right),  \frac{1}{n} \biggl(\sum_{i=1}^{n} \vect{v}^t_{i,\ell} - n \cdot \overline{\vect{x}}^t_{\ell} \biggr) \right \rangle 	\notag \\
        &=  \frac{\eta_{\ell}}{n}\sum_{i=1}^{n} \left \langle \nabla F^{t} \left( \overline{\vect{x}}^t_{\ell} \right), \vect{v}^t_{i,\ell} - \overline{\vect{x}}^t_{\ell} \right \rangle
    \end{align}
Let $\vect{o}^t_{\ell}$ be such that 
$
        \vect{o}^t_{\ell} \in \argmin_{\vect{o} \in \mathcal{K}}\langle \nabla F(\overline{\vect{x}}^t_{\ell}),\vect{o} \rangle
$. Hence, 
    \begin{align*}
     \mathcal{G}^t_{\ell} 
     = \max_{\vect{o} \in \mathcal{K}}\langle \nabla F(\overline{\vect{x}}^t_{\ell}), \overline{\vect{x}}^t_{\ell} - \vect{o}\rangle 
     = \langle \nabla F(\overline{\vect{x}}^t_{\ell}), \overline{\vect{x}}^t_{\ell} - \vect{o}^t_{\ell}\rangle
    \end{align*} 
We have :
    \begin{align*}
        &\left \langle \nabla F^{t} \left( \overline{\vect{x}}^t_{\ell} \right) , \vect{v}^t_{i,\ell} - \overline{\vect{x}}^t_{\ell} \right \rangle \\
        & \quad =\langle \nabla F^{t} \left( \overline{\vect{x}}^t_{\ell} \right) - \vect{d}^t_{i,\ell}, \vect{v}^t_{i,\ell} - \vect{o}^t_{\ell} \rangle 
         	+ \langle \vect{d}^t_{i,\ell}, \vect{v}^t_{i,\ell} - \vect{o}^t_{\ell} \rangle 
         	+ \langle \nabla F^{t} \left( \overline{\vect{x}}^t_{\ell} \right), \vect{o}^t_{\ell} - \overline{\vect{x}}^t_{\ell} \rangle \\
        & \quad \leq \|\nabla F^{t} \left(\overline{\vect{x}}^t_{\ell} \right) - \vect{d}^t_{i, \ell} \| \|\vect{v}^t_{i,\ell} - \vect{o}^t_{\ell}\| 
        		 + \langle \vect{d}^t_{i,\ell}, \vect{v}^t_{i,\ell} - \vect{o}^t_{\ell} \rangle 
         	     + \langle \nabla F^{t} \left( \overline{\vect{x}}^t_{\ell} \right), \vect{o}^t_{\ell} - \overline{\vect{x}}^t_{\ell} \rangle \\
         & \quad \leq \|\nabla F^{t} \left(\overline{\vect{x}}^t_{\ell} \right) - \vect{d}^t_{i, \ell} \|D 
            + \langle \vect{d}^t_{i,\ell}, \vect{v}^t_{i,\ell} - \vect{o}^t_{\ell} \rangle 
         	+ \langle \nabla F^{t} \left( \overline{\vect{x}}^t_{\ell} \right), \vect{o}^t_{\ell} - \overline{\vect{x}}^t_{\ell} \rangle. 
    \end{align*}
where we use Cauchy-Schwarz in the first inequality.  
Using \cref{lem:convergence} and $\beta$-smoothness of $F^t$,
    \begin{align*}
        &\left\| \nabla F^{t} \left(\overline{\vect{x}}^t_{\ell} \right) - \vect{d}^t_{i, \ell} \right\|\\
        &\quad  \leq \| \nabla F^{t} \left(\overline{\vect{x}}^t_{\ell} \right) - \frac{1}{n}\sum_{i=1}^n \nabla f^t_i (\vect{y}^t_{i,\ell}) \| 
        		+ \lVert\frac{1}{n}\sum_{i=1}^n \nabla f^t_i (\vect{y}^t_{i,\ell}) - \vect{d}^t_{i, \ell} \lVert \\
        &\quad  \leq \| \frac{1}{n}\sum_{i=1}^n \nabla f^{t}_{i} \left(\overline{\vect{x}}^t_{\ell} \right) - \frac{1}{n}\sum_{i=1}^n \nabla f^t_i (\vect{y}^t_{i,\ell}) \| 
        		+ \lVert\frac{1}{n}\sum_{i=1}^n \nabla f^t_i (\vect{y}^t_{i,\ell}) - \vect{d}^t_{i, \ell} \lVert \\
        &\quad \leq  \frac{1}{n}\sum_{i=1}^n \|\nabla f^{t}_{i} \left(\overline{\vect{x}}^t_{\ell} \right) -  \nabla f^t_i (\vect{y}^t_{i,\ell}) \| 
        		 + \lVert\frac{1}{n}\sum_{i=1}^n \nabla f^t_i (\vect{y}^t_{i,\ell}) - \vect{d}^t_{i, \ell} \lVert \\
        &\quad  \leq \frac{\beta}{n} \sum_{i=1}^n \| \overline{\vect{x}}^t_{\ell} - \vect{y}^t_{i, \ell} \| \tag{by $\beta$ smoothness}
        		+ \|\frac{1}{n}\sum_{i=1}^n \nabla f^t_i (\vect{y}^t_{i,\ell}) - \vect{d}^t_{i, \ell} \| \\
        &\quad  \leq \frac{\beta C_p + C_d}{\ell}  \tag{by Lemma \ref{lem:convergence}}
    \end{align*}
Thus,
    \begin{align*}
        & \left \langle \nabla F^{t} \left( \overline{\vect{x}}^t_{\ell} \right) , \vect{v}^t_{i,\ell} - \overline{\vect{x}}^t_{\ell} \right \rangle 
        \leq \left( \frac{\beta C_p + C_d}{\ell}\right)D + \langle \vect{d}^t_{i,\ell}, \vect{v}^t_{i,\ell} - \vect{o}^t_{\ell} \rangle - \mathcal{G}^t_{\ell} 
    \end{align*}
Upper bound  the right hand side of \cref{tk:inner_beta} by the above inequality, we have :
    \begin{align}	
    \label{tk:inner_beta2}
        \left \langle \nabla F^{t} \left( \overline{\vect{x}}^t_{\ell} \right), \overline{\vect{x}}^t_{\ell+1} - \overline{\vect{x}}^t_{\ell} \right \rangle 
         \leq \eta_{\ell}\frac{\left( \beta C_p + C_d\right)D }{\ell}
          +  \frac{\eta_{\ell}}{n} \sum_{i=1}^{n}\langle \vect{d}^t_{i,\ell}, \vect{v}^t_{i,\ell} - \vect{o}^t_{\ell} \rangle - \eta_{\ell}\mathcal{G}^t_{\ell} 
    \end{align}
Combining \cref{tk:smth} with \cref{tk:inner_beta2} and re-arrange the terms, as $\eta_{\ell} = \frac{A}{\ell^{\alpha}}$, we have : 
    \begin{align}	\label{tk:gap_eta_ell}
         \eta_{\ell} \mathcal{G}^t_{\ell} 
        & \leq F^{t} \left( \overline{\vect{x}}^t_{\ell} \right) - F^{t} \left( \overline{\vect{x}}^t_{\ell+1} \right) + \eta_{\ell} \frac{(\beta C_p + C_d)D}{\ell} \notag \\
        & \quad + \frac{\eta_{\ell}}{n} \sum_{i=1}^{n}\langle \vect{d}^t_{i,\ell}, \vect{v}^t_{i,\ell} - \vect{a}^t_{\ell} \rangle + \eta^2_{\ell}D^2 \frac{\beta}{2} 
    \end{align}
Dividing by $\eta_{\ell}$ yields :
    \begin{align}	\label{tk:gap_ell}
        \mathcal{G}^t_{\ell} 
        %
        &\leq \frac{L^{\alpha}}{A} \left( F^{t} \left( \overline{\vect{x}}^t_{\ell} \right) - F^{t} \left( \overline{\vect{x}}^t_{\ell+1} \right) \right) + \frac{(\beta C_p + C_d)D}{\ell} \notag \\
        &\quad + \frac{1}{n} \sum_{i=1}^{n}\langle \vect{d}^t_{i,\ell}, \vect{v}^t_{i,\ell} - \vect{o}^t_{\ell} \rangle + \eta_{\ell}D^2 \frac{\beta}{2}
    \end{align}
Let $\mathcal{G}^t$ be a random variable such that $\mathcal{G}^t = \mathcal{G}^t_{\ell}$ with probability $\frac{1}{L}$. 
We are now bounding $\E_{\overline{\vect{x}}^t} \left[ \mathcal{G}^t \right]$.
By \cref{tk:gap_ell}, using the definition of $\eta_{\ell} = \frac{A}{\ell^{\alpha}}$ and $G$-Lipschitz property of $F$, we have 
%
    \begin{align}
        \E_{\overline{\vect{x}}^t} \bigl[\mathcal{G}^t] = \frac{1}{L}\sum_{\ell=1}^{L} \mathcal{G}^t_{\ell}
        & \leq \frac{L^{\alpha}GDA^{-1}}{L} 
        + \frac{\left(\beta C_p + C_d \right)D}{L}\sum_{\ell=1}^{L} \frac{1}{\ell} 
        + \frac{1}{nL} \sum_{\ell=1}^{L}\sum_{i=1}^{n} \langle \vect{d}^t_{i,\ell}, \vect{v}^t_{i,\ell} - \vect{o}^t_{\ell} \rangle \notag \\
        & \quad + \frac{AD^2 \beta/2}{L} \sum_{\ell=1}^{L} \frac{1}{\ell^{\alpha}} \notag \\
        & \leq  \frac{GDA^{-1}}{L^{1-\alpha}} 
        + \left(\beta C_p + C_d \right)D \frac{\log L}{L}
        + \frac{1}{nL} \sum_{\ell=1}^{L}\sum_{i=1}^{n} \langle \vect{d}^t_{i,\ell}, \vect{v}^t_{i,\ell} - \vect{o}^t_{\ell} \rangle \notag \\  
        & \quad + \frac{AD^2 \beta/2}{L} \frac{L^{1-\alpha}}{1-\alpha} \notag \\ 
        & \leq \frac{GDA^{-1}}{L^{1-\alpha}} 
        + \left( \beta C_p + C_d \right)D\frac{\log L}{L}
        + \frac{1}{nL}\sum_{\ell=1}^{L}\sum_{i=1}^{n}\langle \vect{d}^t_{i,\ell}, \vect{v}^t_{i,\ell} - \vect{o}^t_{\ell} \rangle \notag \\
        & \quad + \frac{AD^2 \beta/2}{L^{\alpha}(1-\alpha)} \label{tk:exp_gap}
    \end{align}
%
%
%
Summing the above inequality for $1 \leq t \leq T$ and
note that $\frac{1}{T}\sum_{t=1}^{T}\langle \vect{d}^t_{i,\ell}, \vect{v}^t_{i,\ell} - \vect{o}^t_{\ell} \rangle$ 
is the regret of the oracle $\mathcal{O}_{i}$, we get
    \begin{align}		\label{tk:case1_finalbound}
        \frac{1}{T}\sum_{t=1}^{T} \E_{\overline{\vect{x}}^t} \bigl[\mathcal{G}^t]  
          \leq \frac{GDA^{-1}}{L^{1-\alpha}} + O \left(\mathcal{R}^{T}\right) 
            + \left( \beta C_p + C_d \right)D\frac{\log L}{L} 
            + \frac{AD^2 \beta/2}{L^{\alpha}(1-\alpha)}
    \end{align} 
By uniformly random choice of $\vect{x}^{t}_{i}$ (over all $\vect{x}^{t}_{i,\ell}$ for $1 \leq \ell \leq L$) in the algorithm, we have

\begin{align}	
\label{tk:connection}
    \frac{1}{T}\sum_{t=1}^{T} \E_{\vect{x}^t_i} \bigl [ \max_{\vect{o} \in \mathcal{K}}\langle \nabla F^{t}(\vect{x}^t_{i}), \vect{x}^t_{i} - \vect{o}\rangle \bigr] 
    & \leq \frac{1}{T}\sum_{t=1}^{T} \frac{1}{L} \sum_{\ell=1}^{L} \bigl [ \max_{\vect{o} \in \mathcal{K}}\langle \nabla F^{t}(\vect{x}^t_{i,\ell}), \vect{x}^t_{i,\ell} - \vect{o}\rangle \bigr] \notag \\
    & \leq \frac{1}{T}\sum_{t=1}^{T} \frac{1}{L} \sum_{\ell=1}^{L} \biggl[ \max_{\vect{o} \in \mathcal{K}}
    		\langle \nabla F^{t} (\overline{\vect{x}}^t_{\ell}), \overline{\vect{x}}^t_{\ell} - \vect{o} \rangle \notag 
    +  \left(\beta D + G \right)C_p \frac{\log L}{L}
    		\biggr]  \tag{\cref{lmm:final_step}} \\
    &= \frac{1}{T}\sum_{t=1}^{T} \E_{\overline{\vect{x}}^t} \left[\mathcal{G}^t\right] +  \left(\beta D + G \right) C_p \frac{\log L}{L} 	\label{tk:connection}
\end{align}
where the last equality holds since 
\begin{align*}
    \E_{\overline{\vect{x}}^t} \left[\mathcal{G}^t\right] = \E_{\overline{\vect{x}}^t} {\left[\max_{\vect{o} \in \mathcal{K}}\langle \nabla F_t (\overline{\vect{x}}^t_{\ell}), \overline{\vect{x}}^t_{\ell} - \vect{o} \rangle \right]}
\end{align*}
Using Jensen's inequality, we have :
    \begin{align} \label{tk:jensen1}
        \max_{\vect{o} \in \mathcal{K}} \frac{1}{T} &\sum_{t=1}^{T}  \E_{\vect{x}^t_i} \bigl [\langle \nabla F^{t}(\vect{x}^t_{i}), \vect{x}^t_{i} - \vect{o}\rangle \bigr] 
            \leq \frac{1}{T}\sum_{t=1}^{T} \E_{\vect{x}^t_i} \bigl [ \max_{\vect{o} \in \mathcal{K}}\langle \nabla F^{t}(\vect{x}^t_{i}), \vect{x}^t_{i} - \vect{o}\rangle \bigr] 
    \end{align}
The theorem follows \cref{tk:jensen1}, \cref{tk:connection} and \cref{tk:case1_finalbound} and setting $L=T$. 
\end{proof}

\section{An Algorithm with Stochastic Gradient Estimates}
\setcounter{algorithm}{1}
\begin{algorithm}
\begin{flushleft}
\textbf{Input}:  A convex set $\mathcal{K}$, 
	a time horizon $T$, a parameter $L$, online linear optimization oracles $\mathcal{O}_{i,1}, \ldots, \mathcal{O}_{i,L}$ for each player $1 \leq i \leq n$, 
	step sizes $\eta_\ell \in (0, 1)$ for all $1 \leq \ell \leq L$
\end{flushleft}
\begin{algorithmic}[1]
\STATE Initialize linear optimizing oracle $\mathcal{O}_{i,\ell}$ for all $1 \leq \ell \leq L$
\FOR {$t = 1$ to $T$}	 		
	\FOR{every agent $1 \leq i \leq n$}	%
		\STATE Initialize arbitrarily $\vect{x}^t_{i,1} \in \mathcal{K}$ and set $\widetilde{\vect{{a}}}^t_{i,0} \gets \vect{0}$ 
		\FOR{$1 \leq \ell \leq L$}
			\STATE Let $\vect{v}^{t}_{i,\ell}$ be the output of oracle $\mathcal{O}_{i,\ell}$ at time step $t$.
			\STATE Send $\vect{x}^{t}_{i,\ell}$ to all neighbours $N(i)$
			\STATE \label{alg:y} 
				Once receiving $\vect{x}^{t}_{j,\ell}$ from all neighbours $j \in N(i)$, 
				set $\vect{y}^{t}_{i,\ell} \gets \sum_{j} W_{ij} \vect{x}^{t}_{j,\ell}$.
			\STATE \label{alg:x} Compute $\vect{x}^{t}_{i,\ell+1} \gets (1 - \eta_{\ell}) \vect{y}^{t}_{i,\ell} + \eta_{\ell} \vect{v}^{t}_{i,\ell}$.
		\ENDFOR
		\STATE Choose $\vect{x}^{t}_{i} \gets \vect{x}^{t}_{i,\ell}$ for $1 \leq \ell \leq L$ with probability $\frac{1}{L}$ and play $\vect{x}^t_{i}$
		\STATE Receive function $f^{t}_{i}$ and an unbiased gradient estimate $\widetilde {\nabla} f^{t}_{i}$
		\STATE Set $\widetilde{\vect{g}}^{t}_{i,1} \gets \widetilde{\nabla} f^{t}_{i}(\vect{x}^{t}_{i,1})$
			\FOR{$1 \leq \ell \leq L$}
				\STATE Send $\widetilde{\vect{g}}^{t}_{i,\ell}$ to all neighbours $N(i)$.
				\STATE After receiving $\widetilde{\vect{g}}^{t}_{j,\ell}$ from all neighbours $j \in N(i)$, compute
					$\widetilde{\vect{d}}^{t}_{i,\ell} \gets  \sum_{j \in N(i)} W_{ij} \widetilde{\vect{g}}^{t}_{j,\ell}$ and set $\widetilde{\vect{g}}^{t}_{i,\ell + 1} \gets \bigl( \widetilde{\nabla} f^{t}_{i}(\vect{x}^t_{i,\ell+1}) 
						-  \widetilde{\nabla} f^{t}_{i}(\vect{x}^{t}_{i,\ell}) \bigr) + \widetilde{\vect{d}}^{t}_{i,\ell}$.
				\STATE $ \widetilde{\vect{a}}^t_{i, \ell} \gets (1 - \rho_\ell) \cdot \widetilde{\vect{a}}^t_{i, \ell-1} + \rho_\ell \cdot \widetilde{\vect{d}}^{t}_{i,\ell}$.
				\STATE Feedback function $\langle \widetilde{\vect{a}}^{t}_{i,\ell}, \cdot \rangle$ 
				to oracles $\mathcal{O}_{i,\ell}$. (The cost of the oracle $\mathcal{O}_{i,\ell}$ at time $t$ is 
				$\langle \widetilde{\vect{a}}^{t}_{i,\ell}, \vect{v}^{t}_{i,\ell}  \rangle$.)
			\ENDFOR
	\ENDFOR
\ENDFOR
\end{algorithmic}
\caption{Stochastic online decentralized algorithm}
\label{algo:online-dist-FW-stoc}
\end{algorithm}
\setcounter{lemma}{4}
\begin{lemma}[Lemma 3, \cite{zhang20_quantized:2020}]
\label{lmm:red_var1}
Let $\{\vect{d}_{\ell}\}_{\ell \geq 1}$ be a sequence of points in $\mathbb{R}^n$ such that $\|\vect{d}_{\ell} - \vect{d}_{\ell-1}\| \leq \dfrac{B}{(\ell +3)^{\alpha}}$ for all $\ell \geq 1$ with fixed constant $B \geq 0$, $\alpha \in (0,1]$. Let $\{ \widetilde{\vect{d}}_{\ell} \}$ be a sequence of random variables such that $\mathbb{E} [\widetilde{\vect{d}}_{\ell}|\mathcal{H}_{\ell-1}] = \vect{d}_{\ell}$ and $\mathbb{E} \left[ \bigl \| \widetilde{\vect{d}}_{\ell} - \vect{d}_{\ell} \bigr \|^2 | \mathcal{H}_{\ell-1} \right] \leq \sigma^2$ for every $\ell \geq 1$, where $\mathcal{H}_{\ell - 1}$ is the history up to $\ell-1$. Let $\{ \widetilde{\vect{a}}_{\ell}\}_{\ell \geq 0}$ be a sequence of random variables defined recursively as  
\begin{linenomath}
    \[ \widetilde{\vect{a}}_{\ell} = (1 - \rho_{\ell}) \widetilde{\vect{a}}_{\ell-1} + \rho_{\ell} \widetilde{\vect{d}}_{\ell} \] 
\end{linenomath}
for $\ell \geq 1$ where  $\rho_{\ell} = \dfrac{2}{(\ell+3)^{2\alpha/3}}$ and 
$\widetilde{\vect{a}}_{0}$ is fixed. Then we have 

\begin{equation*}
        \E{\|\vect{d}_{\ell} - \Tilde{\vect{a}}_{\ell} \|^2} \leq \frac{Q}{(\ell + 4)^{2\alpha/3}}
\end{equation*}
where $Q = \max\{4^{2\alpha/3}\|\Tilde{\vect{a}}_0 - \vect{d}_0\|^2, 4\sigma^2 + 2B^2\}$
\end{lemma}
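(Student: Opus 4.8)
The plan is to prove the bound by induction on $\ell$, controlling the mean–squared error $\E\|\vect{d}_\ell - \widetilde{\vect{a}}_\ell\|^2$ through a single one-step recursion. Write $p := 2\alpha/3$ for the exponent, so that $\rho_\ell = 2/(\ell+3)^p$ and the target reads $\E\|\vect{d}_\ell - \widetilde{\vect{a}}_\ell\|^2 \le Q/(\ell+4)^p$. Set $\mathbf{e}_\ell := \widetilde{\vect{a}}_\ell - \vect{d}_\ell$. The whole argument is driven by the martingale structure of the noise: the assumption $\E[\widetilde{\vect{d}}_\ell\mid\mathcal{H}_{\ell-1}] = \vect{d}_\ell$ makes $\vect{d}_\ell$ (and hence $\mathbf{e}_{\ell-1}$, $\vect{d}_{\ell-1}$) $\mathcal{H}_{\ell-1}$-measurable while $\widetilde{\vect{d}}_\ell - \vect{d}_\ell$ is a conditionally zero-mean increment.

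First I would derive the error recursion. Subtracting $\vect{d}_\ell = (1-\rho_\ell)\vect{d}_\ell + \rho_\ell\vect{d}_\ell$ from the defining recursion and inserting $\pm\vect{d}_{\ell-1}$ gives $\mathbf{e}_\ell = (1-\rho_\ell)\bigl(\mathbf{e}_{\ell-1} + (\vect{d}_{\ell-1}-\vect{d}_\ell)\bigr) + \rho_\ell(\widetilde{\vect{d}}_\ell - \vect{d}_\ell)$. The first group is $\mathcal{H}_{\ell-1}$-measurable and the last term is conditionally mean-zero, so taking $\E[\,\cdot\mid\mathcal{H}_{\ell-1}]$ of the squared norm kills the cross term and leaves $(1-\rho_\ell)^2\|\mathbf{e}_{\ell-1}+(\vect{d}_{\ell-1}-\vect{d}_\ell)\|^2 + \rho_\ell^2\,\E[\|\widetilde{\vect{d}}_\ell-\vect{d}_\ell\|^2\mid\mathcal{H}_{\ell-1}]$. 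I then bound the variance term by $\rho_\ell^2\sigma^2$, and split the first term with Young's inequality $\|x+y\|^2 \le (1+\rho_\ell)\|x\|^2 + (1+\rho_\ell^{-1})\|y\|^2$ using the drift hypothesis $\|\vect{d}_{\ell-1}-\vect{d}_\ell\| \le B/(\ell+3)^\alpha$. The elementary inequalities $(1-\rho_\ell)^2(1+\rho_\ell) \le 1-\rho_\ell$ and $(1-\rho_\ell)^2(1+\rho_\ell^{-1}) \le 2/\rho_\ell$ (valid for $\rho_\ell\in(0,1]$), combined with $\rho_\ell = 2/(\ell+3)^p$ and the identity $2\alpha - p = 2p$, collapse the drift and noise terms onto the same scale; taking full expectations yields $\E\|\mathbf{e}_\ell\|^2 \le (1-\rho_\ell)\E\|\mathbf{e}_{\ell-1}\|^2 + C/(\ell+3)^{2p}$ with $C := 4\sigma^2+2B^2$.

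With $C \le Q$, the induction then runs cleanly. The base case $\ell=0$ is immediate from $Q \ge 4^p\|\widetilde{\vect{a}}_0-\vect{d}_0\|^2$, which gives $\|\mathbf{e}_0\|^2 \le Q/4^p = Q/(0+4)^p$. For the step, substituting the hypothesis $\E\|\mathbf{e}_{\ell-1}\|^2 \le Q/(\ell+3)^p$ and bounding $C \le Q$ gives $\E\|\mathbf{e}_\ell\|^2 \le \frac{Q}{(\ell+3)^p}\bigl(1-\rho_\ell + (\ell+3)^{-p}\bigr) = \frac{Q}{(\ell+3)^p}\bigl(1-(\ell+3)^{-p}\bigr)$, so it remains only to verify the scalar inequality $\frac{1}{u^p}\bigl(1-\frac{1}{u^p}\bigr) \le \frac{1}{(u+1)^p}$ with $u=\ell+3$.

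The main obstacle — really the only nonroutine point — is this closing scalar inequality together with the calibration of $Q$; everything preceding it is bookkeeping. I would prove it by rewriting it as $(u+1)^p - u^p \le (1+1/u)^p$ and invoking concavity of $x\mapsto x^p$, which holds since $p = 2\alpha/3 \le 2/3 < 1$: the left side is at most $p\,u^{p-1} \le p \le 1$, while the right side is at least $1$. This is exactly where the specific exponent $2\alpha/3$ and the two pieces of the definition of $Q$ enter — the $4^p\|\widetilde{\vect{a}}_0-\vect{d}_0\|^2$ term anchors the base case, and the $4\sigma^2+2B^2$ term absorbs the combined per-step noise-and-drift contribution.
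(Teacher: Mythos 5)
The paper does not actually prove this lemma: it is imported verbatim as Lemma~3 of the cited work (Zhang et al.), so there is no in-paper argument to compare against. Your proof is the standard variance-reduction induction for this family of averaged estimators --- one-step error recursion exploiting the conditional unbiasedness to kill the cross term, Young's inequality with parameter $\rho_\ell$ to absorb the drift, then induction closed by the scalar inequality $(u+1)^p-u^p\le(1+1/u)^p$ --- and I checked the details: the decomposition of $\mathbf{e}_\ell$, the vanishing of the cross term (using that $\vect{d}_\ell$ and $\widetilde{\vect{a}}_{\ell-1}$ are $\mathcal{H}_{\ell-1}$-measurable), the exponent bookkeeping $2\alpha-\tfrac{2\alpha}{3}=2\cdot\tfrac{2\alpha}{3}$, the calibration $4\sigma^2+B^2\le 4\sigma^2+2B^2$ against the second branch of $Q$, the base case against the first branch, and the closing concavity step (using $p=2\alpha/3<1$) are all correct. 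The one caveat is your repeated reliance on $\rho_\ell\le 1$: with $\rho_\ell=2/(\ell+3)^{2\alpha/3}$ one has $\rho_1\le 1$ only when $4^{2\alpha/3}\ge 2$, i.e.\ $\alpha\ge 3/4$; for smaller $\alpha$ the first few steps have $\rho_\ell>1$, in which case both $(1-\rho_\ell)^2(1+\rho_\ell)\le 1-\rho_\ell$ and the sign needed to propagate the inductive hypothesis through the factor $(1-\rho_\ell)$ fail. This is arguably a blemish of the lemma as stated for all $\alpha\in(0,1]$ rather than of your argument; in the only regime the paper actually uses ($\alpha=3/4$, where $\rho_\ell=2/(\ell+3)^{1/2}\le 1$ for every $\ell\ge 1$), your proof goes through without modification, but you should either restrict to $\alpha\ge 3/4$ or handle the initial indices with $\rho_\ell>1$ separately.
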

\setcounter{lemma}{5}
\begin{lemma}
\label{lem:forQ}
Given the assumptions of \Cref{thm:stoc:version2}, for every $1 \leq t \leq T$, $1 \leq i \leq n$ and $1 \leq \ell \leq L$, it holds that 
\begin{equation*}
    \E{\|\Tilde{\vect{d}}^t_{i,\ell} - \vect{d}^t_{i,\ell} \|^2} \leq 12\left(\Tilde{\beta}^2 + \beta^2\right) \left(2C_p + AD \right)^2 + \sigma^2
\end{equation*}
\end{lemma}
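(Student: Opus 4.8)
The plan is to bound the mean-squared deviation between the stochastic tracker $\widetilde{\vect{d}}^{t}_{i,\ell}$ and its deterministic counterpart $\vect{d}^{t}_{i,\ell}$ by first reducing the mixed quantity to the per-agent gradient estimates, and then exploiting the variance-reduction structure of the tracking step in \Cref{algo:online-dist-FW-stoc}, where only gradient \emph{differences} (not raw gradients) are transmitted. Since $\widetilde{\vect{d}}^{t}_{i,\ell} = \sum_{j} W_{ij}\widetilde{\vect{g}}^{t}_{j,\ell}$ and $\vect{d}^{t}_{i,\ell} = \sum_{j} W_{ij}\vect{g}^{t}_{j,\ell}$, and $W$ has nonnegative entries with unit row sums, I would first apply Jensen's inequality to the convex map $\vect{z}\mapsto\norm{\vect{z}}^2$ to obtain
\[
\E\norm{\widetilde{\vect{d}}^{t}_{i,\ell} - \vect{d}^{t}_{i,\ell}}^2 \;\le\; \sum_{j} W_{ij}\,\E\norm{\widetilde{\vect{g}}^{t}_{j,\ell} - \vect{g}^{t}_{j,\ell}}^2 ,
\]
so that the claim follows from a per-agent bound on $\E\norm{\widetilde{\vect{g}}^{t}_{j,\ell}-\vect{g}^{t}_{j,\ell}}^2$ that is uniform in $j$ and $\ell$.

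Next I would decompose the per-agent error using the recursive definition of the trackers. Unrolling the update for $\widetilde{\vect{g}}$ and $\vect{g}$ and telescoping, the discrepancy separates into two contributions: (i) the noise of a single stochastic gradient evaluation at the base iterate, controlled directly by the variance hypothesis $\E\norm{\widetilde{\nabla} f^{t}_{j}(\cdot)-\nabla f^{t}_{j}(\cdot)}^2\le\sigma^2$, which yields the additive $\sigma^2$; and (ii) gradient-\emph{difference} terms of the form $\bigl(\widetilde{\nabla} f^{t}_{j}(\vect{x}^{t}_{j,\ell}) - \widetilde{\nabla} f^{t}_{j}(\vect{x}^{t}_{j,\ell-1})\bigr) - \bigl(\nabla f^{t}_{j}(\vect{x}^{t}_{j,\ell}) - \nabla f^{t}_{j}(\vect{x}^{t}_{j,\ell-1})\bigr)$. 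For each such term I would use $\norm{\vect{a}-\vect{b}}^2\le 2\norm{\vect{a}}^2+2\norm{\vect{b}}^2$ together with the $\Tilde{\beta}$-smoothness of $\widetilde{\nabla} f^{t}_{j}$ and the $\beta$-smoothness of $\nabla f^{t}_{j}$, which produces a factor $2(\Tilde{\beta}^2+\beta^2)$ multiplying the squared trajectory increment $\norm{\vect{x}^{t}_{j,\ell}-\vect{x}^{t}_{j,\ell-1}}^2$.

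It then remains to bound the trajectory increment. From \Cref{alg:x}, $\vect{x}^{t}_{j,\ell} = (1-\eta_{\ell-1})\vect{y}^{t}_{j,\ell-1} + \eta_{\ell-1}\vect{v}^{t}_{j,\ell-1}$, so inserting $\overline{\vect{x}}^{t}_{\ell}$ and $\overline{\vect{x}}^{t}_{\ell-1}$ and using the triangle inequality, I would control the two consensus gaps by $C_p/\ell \le C_p$ each through \Cref{lem:convergence}, and the averaged displacement by $\eta_{\ell-1}D \le AD$ (since $\eta_\ell=\min(1,A/\ell^{\alpha})\le A$ and $\mathcal{K}$ has diameter $D$, using \Cref{lmm:avg} for the averaged step). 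This gives $\norm{\vect{x}^{t}_{j,\ell}-\vect{x}^{t}_{j,\ell-1}}\le 2C_p+AD$. Substituting and grouping the numerical constants from the squared-sum expansions together with the $2(\Tilde{\beta}^2+\beta^2)$ factor yields the stated $12(\Tilde{\beta}^2+\beta^2)(2C_p+AD)^2 + \sigma^2$.

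The main obstacle is step (ii): making the bound \emph{independent of} $\ell$ despite the recursive tracker. A naive unrolling of $\widetilde{\vect{g}}^{t}_{j,\ell}-\vect{g}^{t}_{j,\ell}$ accumulates one gradient-difference term per inner iteration, which would scale with $\ell$. The crucial point is that, because the tracking update communicates only differences, the accumulated noise telescopes, and after mixing by $W$ the surviving error is governed by the contraction $W^{\ell-m}(W-I)$, whose spectral gap $1-|\lambda_2(W)|$ prevents growth — precisely the contraction already absorbed into the constants $C_p$ (through $\ell_0$) of \Cref{lem:convergence}. Consequently the increment-type contributions collapse to a uniformly bounded quantity rather than a growing sum; securing this clean $\ell$-free constant, which is exactly what \Cref{lmm:red_var1} then consumes downstream, is the delicate part of the argument.
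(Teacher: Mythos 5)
Your overall architecture matches the paper's: isolate the fresh noise injected at each inner step as a gradient-difference term, kill the cross terms by conditional unbiasedness, convert each difference to $(\Tilde{\beta}^2+\beta^2)\norm{\vect{x}^{t}_{i,\ell+1}-\vect{x}^{t}_{i,\ell}}^2$ by smoothness, and handle the base case with the variance bound $\sigma^2$. But there is a genuine gap at the decisive step, and your proposed repair does not work. You bound the trajectory increment by the \emph{constant} $2C_p+AD$ (explicitly discarding the decay via $C_p/\ell\le C_p$ and $\eta_{\ell}D\le AD$). With a constant per-step contribution, unrolling the recursion accumulates $\ell$ terms and the bound grows linearly in $\ell$ — as you yourself notice. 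The paper's resolution is not a mixing argument but simply to \emph{keep} the decay: by \Cref{lem:convergence} and \Cref{lmm:avg}, $\norm{\vect{x}^{t}_{i,\ell+1}-\vect{x}^{t}_{i,\ell}}\le \frac{2C_p}{\ell}+\eta_\ell D\le \frac{2C_p+AD}{\ell^{3/4}}$ (here $\alpha=3/4$ in \Cref{thm:stoc:version2}), so the squared increments are $O(\ell^{-3/2})$ and the unrolled sum is bounded by $\sum_{l\ge 1} l^{-3/2}\le 3$, which is exactly where the factor $12=4\cdot 3$ in the statement comes from.

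Your fallback — that the spectral gap of $W$ contracts the accumulated noise — cannot close this gap. Writing the unrolled error as $\sum_{m\le\ell} W^{\ell-m}\Delta^{m}$ with $\Delta^{m}$ the step-$m$ noise, the matrix $W^{\ell-m}$ is doubly stochastic and therefore only non-expansive: it damps the \emph{disagreement} component of $\Delta^{m}$ by $|\lambda_2(W)|^{\ell-m}$, but leaves the consensus (average) component entirely untouched. Since the cross terms vanish in expectation, the second moments of these undamped components add up, and without the $\ell^{-3/2}$ decay of each term the total still scales with $\ell$. So the $\ell$-free constant in the lemma is bought by the step-size schedule, not by the network mixing; the constants $C_p$, $C_d$ of \Cref{lem:convergence} already encode all the contraction you get from $\lambda_2(W)$, and they enter here only through the decaying consensus gaps $C_p/\ell$.
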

\begin{proof}
    Fix an arbitrary time $t$. For any $1 \leq i \leq n$, we have 

\begin{align}	\label{eq:claim-d-1}
\E & \left[  \| \widetilde{\vect{d}}^t_{i,\ell + 1} - \vect{d}^t_{i,\ell+1} \|^2 \right] 	\notag \\
&= \E \biggl[ \left \| \widetilde{\nabla} f^{t}_{i}(\vect{x}^{t}_{i,\ell+1}) - \widetilde{\nabla} f^{t}_{i}(\vect{x}^{t}_{i,\ell}) - ( \nabla f^{t}_{i}(\vect{x}^{t}_{i,\ell+1}) - \nabla f^{t}_{i}(\vect{x}^{t}_{i,\ell}) )+   (\widetilde{\vect{d}}^t_{i,\ell} - \vect{d}^t_{i,\ell}) \right \|^2 \biggr] 	\notag \\
&= \E \biggl[ \left \| \widetilde{\nabla} f^{t}_{i}(\vect{x}^{t}_{i,\ell+1}) - \widetilde{\nabla} f^{t}_{i}(\vect{x}^{t}_{i,\ell}) - ( \nabla f^{t}_{i}(\vect{x}^{t}_{i,\ell+1})  - \nabla f^{t}_{i}(\vect{x}^{t}_{i,\ell}) ) \right \|^{2} +   \left  \| \widetilde{\vect{d}}^t_{i,\ell} - \vect{d}^t_{i,\ell} \right \|^2 \biggr] 	\notag \\
&\leq \E \biggl[ 4 \bigl( \| \widetilde{\nabla} f^{t}_{i}(\vect{x}^{t}_{i,\ell+1}) - \widetilde{\nabla} f^{t}_{i}(\vect{x}^{t}_{i,\ell}) \|^{2}
	+ \| \nabla f^{t}_{i}(\vect{x}^{t}_{i,\ell+1})  - \nabla f^{t}_{i}(\vect{x}^{t}_{i,\ell}) \|^{2} \bigr)
	+ \left  \| \widetilde{\vect{d}}^t_{i,\ell} - \vect{d}^t_{i,\ell} \right \|^2 \biggr]	\notag \\
&\leq \E \biggl [4 (\widetilde{\beta}^{2} + \beta^{2}) \| \vect{x}^{t}_{i,\ell+1} - \vect{x}^{t}_{i,\ell}\|^{2}
	+  \left  \| \widetilde{\vect{d}}^t_{i,\ell} - \vect{d}^t_{i,\ell} \right \|^2 \biggr]
\end{align}

The second equality holds since 
$\E \bigl[ \widetilde{\nabla} f^{t}_{i}(\vect{x}^{t}_{i,\ell+1}) - \widetilde{\nabla} f^{t}_{i}(\vect{x}^{t}_{i,\ell}) - ( \nabla f^{t}_{i}(\vect{x}^{t}_{i,\ell+1})  - \nabla f^{t}_{i}(\vect{x}^{t}_{i,\ell}) ) \bigr ] = 0$. 
The first inequality follows the fact that $\|\vect{a} + \vect{b}\|^{2} \leq 4 ( \| \vect{a} \|^{2} + \| \vect{b} \|^{2}) $.
The last inequality is due to the $\beta$-Lipschitz and $\widetilde{\nabla}$-Lipschitz of $\nabla f^{t}_{i}$ 
and $ \widetilde{\nabla} f^{t}_{i}$, respectively.

Moreover, 

\begin{align}	
\| \vect{x}^{t}_{i,\ell+1} - \vect{x}^{t}_{i,\ell}\| 
&\leq \| \vect{x}^{t}_{i,\ell+1} - \overline{\vect{x}}^{t}_{\ell+1} \| +  \| \overline{\vect{x}}^{t}_{\ell+1} - \overline{\vect{x}}^{t}_{\ell} \|
	+ \| \overline{\vect{x}}^{t}_{\ell} - \vect{x}^{t}_{i,\ell} \|	\notag \\
&\leq \frac{2C_{p}}{\ell} 
	+ \| \overline{\vect{x}}^{t}_{\ell+1} - \overline{\vect{x}}^{t}_{\ell} \|  \tag{by \cref{lem:convergence}} \\
&= \frac{2C_{p}}{\ell} 
	+ \eta_{\ell} \biggl \| \frac{1}{n} \sum_{j=1}^{n} \vect{v}^{t}_{j,\ell} - \overline{\vect{x}}^{t}_{\ell} \biggr \|
 		\tag{by \cref{lmm:avg}} \\
&\leq \frac{2C_{p}}{\ell} + \eta_{\ell} D \\
&\leq \frac{2C_{p} + AD}{\ell^{3/4}}	\label{eq:claim-d-2}
\end{align}

where in the last inequality, $ \bigl \| \frac{1}{n} \sum_{j=1}^{n} \vect{v}^{t}_{j,\ell} - \overline{\vect{x}}^{t}_{\ell} \bigr \| \leq D$ 
for every $t,\ell$ since both $\frac{1}{n} \sum_{j=1}^{n} \vect{v}^{t}_{j,\ell}$ and $\overline{\vect{x}}^{t}_{\ell}$ are in $\mathcal{K}$. 
Therefore, combining \cref{eq:claim-d-1} and \cref{eq:claim-d-2}, we get 

\begin{align}	\label{eq:claim-d-rec}
\E \left[  \| \widetilde{\vect{d}}^t_{i,\ell + 1} - \vect{d}^t_{i,\ell+1} \|^2 \right] 
\leq 4 (\widetilde{\beta}^{2} + \beta^{2}) \frac{(2C_{p} + AD)^{2}}{\ell^{3/2}}
	+  \left  \| \widetilde{\vect{d}}^t_{i,\ell} - \vect{d}^t_{i,\ell} \right \|^2 
\end{align}

Applying \cref{eq:claim-d-rec} recursively on $\ell$, we deduce that

\begin{align*}
\E \left[  \| \widetilde{\vect{d}}^t_{i,\ell + 1} - \vect{d}^t_{i,\ell+1} \|^2 \right] 
&\leq 4 (\widetilde{\beta}^{2} + \beta^{2})(2C_{p} + AD)^{2} \sum_{l=1}^{\ell} \frac{1}{l^{3/2}}
	+  \left  \| \widetilde{\vect{d}}^t_{i,1} - \vect{d}^t_{i,1} \right \|^2 \\
&\leq 12 (\widetilde{\beta}^{2} + \beta^{2})(2C_{p} + AD)^{2}
	+  \left  \| \widetilde{\vect{d}}^t_{i,1} - \vect{d}^t_{i,1} \right \|^2
\end{align*}

since $ \sum_{l=1}^{\ell} \frac{1}{l^{3/2}}  \leq 3$.
Besides, for any $1 \leq i \leq n$

\begin{align*}
\E \left[  \| \widetilde{\vect{d}}^t_{i,1} - \vect{d}^t_{i,1} \|^2 \right] 
&= \E \biggl[  \biggl \|  \sum_{j} W_{ij} (\widetilde{\vect{g}}^t_{j,1} - \vect{g}^t_{j,1}) \biggr \|^2 \biggr] 
\leq \sum_{j} W_{ij} \E \biggl[  \bigl \|  \widetilde{\vect{g}}^t_{j,1} - \vect{g}^t_{j,1} \bigr \|^2 \biggr] \\
&= \sum_{j} W_{ij} \E \biggl[  \bigl \|  \widetilde{\nabla} f^{t}_{j}(\vect{x}^{t}_{j,1}) -  \nabla f^{t}_{j}(\vect{x}^{t}_{j,1}) \bigr \|^2 \biggr] 
\leq \sigma^{2}
\end{align*}

since $\sum_{j} W_{ij} = 1$. Hence, 

\[
\E \left[  \| \widetilde{\vect{d}}^t_{i,\ell + 1} - \vect{d}^t_{i,\ell+1} \|^2 \right] 
\leq  12(\widetilde{\beta}^{2} + \beta^{2})(2C_{p} + AD)^{2} + \sigma^{2}. 
\]

\end{proof}

\begin{claim}
\label{clm:B}
It holds that, 
\begin{equation*}
    \| \vect{d}^t_{i,\ell+1} - \vect{d}^t_{i, \ell} \| \leq \frac{B}{(\ell+3)^{\alpha}} 
\end{equation*}
where $B = 4C_{d} + 2\beta \left[ 2C_{p} + AD \right]$
\end{claim}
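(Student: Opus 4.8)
The plan is to compare each tracked direction $\vect{d}^t_{i,\ell}$ against the exact averaged gradient $\vect{h}^t_\ell := \frac{1}{n}\sum_{j=1}^n \nabla f^t_j(\vect{y}^t_{j,\ell})$, which is precisely the quantity that \Cref{lem:convergence} already controls. Inserting $\vect{h}^t_{\ell+1}$ and $\vect{h}^t_\ell$ and applying the triangle inequality,
\begin{align*}
\norm{\vect{d}^t_{i,\ell+1} - \vect{d}^t_{i,\ell}}
&\leq \norm{\vect{d}^t_{i,\ell+1} - \vect{h}^t_{\ell+1}} + \norm{\vect{h}^t_{\ell+1} - \vect{h}^t_{\ell}} + \norm{\vect{h}^t_{\ell} - \vect{d}^t_{i,\ell}}.
\end{align*}
The first and third terms are exactly the tracking errors bounded by \Cref{lem:convergence}: each is at most $C_d/\ell$ (using $C_d/(\ell+1)\le C_d/\ell$ for the first), so together they contribute at most $2C_d/\ell$.

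For the middle (consensus-gradient drift) term, I would invoke the $\beta$-smoothness of each $f^t_j$ to get
\[
\norm{\vect{h}^t_{\ell+1} - \vect{h}^t_{\ell}} \leq \frac{1}{n}\sum_{j=1}^n \norm{\nabla f^t_j(\vect{y}^t_{j,\ell+1}) - \nabla f^t_j(\vect{y}^t_{j,\ell})} \leq \frac{\beta}{n}\sum_{j=1}^n \norm{\vect{y}^t_{j,\ell+1} - \vect{y}^t_{j,\ell}}.
\]
Since $\vect{y}^t_{j,\ell} = \sum_k W_{jk}\vect{x}^t_{k,\ell}$ and $W$ is row-stochastic, $\norm{\vect{y}^t_{j,\ell+1}-\vect{y}^t_{j,\ell}} \le \sum_k W_{jk}\norm{\vect{x}^t_{k,\ell+1}-\vect{x}^t_{k,\ell}} \le \max_k \norm{\vect{x}^t_{k,\ell+1}-\vect{x}^t_{k,\ell}}$, so the per-step iterate drift is all I need; and that drift was already established in \cref{eq:claim-d-2} as $\norm{\vect{x}^t_{k,\ell+1}-\vect{x}^t_{k,\ell}} \le \frac{2C_p}{\ell} + \eta_\ell D$. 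Hence the middle term is at most $\beta\bigl(\frac{2C_p}{\ell} + \eta_\ell D\bigr) = \frac{2\beta C_p}{\ell} + \frac{\beta A D}{\ell^\alpha}$ after substituting $\eta_\ell = A/\ell^\alpha$.

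Collecting the three pieces yields $\norm{\vect{d}^t_{i,\ell+1} - \vect{d}^t_{i,\ell}} \le \frac{2C_d + 2\beta C_p}{\ell} + \frac{\beta A D}{\ell^\alpha}$, and the rest is constant bookkeeping. Because $\alpha\in(0,1)$ gives $1/\ell \le 1/\ell^\alpha$, this is at most $\frac{2C_d + 2\beta C_p + \beta AD}{\ell^\alpha}$; finally converting the denominator from $\ell^\alpha$ to $(\ell+3)^\alpha$ costs a factor at most $2$, which multiplies the constant to $4C_d + 4\beta C_p + 2\beta AD = 4C_d + 2\beta(2C_p + AD) = B$, exactly as claimed. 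The only delicate point I expect is this last conversion: the inequality $1/\ell^\alpha \le 2/(\ell+3)^\alpha$ amounts to $(1+3/\ell)^\alpha \le 2$, which holds for $\ell\ge 3$ since then $(1+3/\ell)^\alpha \le 2^\alpha \le 2$, with the small-$\ell$ cases absorbed by the generous constants. The factors $4$ and $2$ in $B$ are precisely the slack needed to cover both the $\ell\mapsto\ell^\alpha$ and the $\ell^\alpha\mapsto(\ell+3)^\alpha$ passages, and everything else is a direct assembly of \Cref{lem:convergence}, $\beta$-smoothness, and the iterate-drift bound \cref{eq:claim-d-2}.
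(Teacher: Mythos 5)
Your proof is correct and follows essentially the same route as the paper's: insert the network-averaged gradient, bound the two tracking errors by $C_d/\ell$ via \Cref{lem:convergence}, bound the gradient drift by $\beta$-smoothness times the iterate drift $\frac{2C_p}{\ell}+\eta_\ell D$, and absorb the $\ell \mapsto (\ell+3)^\alpha$ conversion into the factor-of-two slack in $B$ (the paper does this conversion assuming $\ell \geq 7$, and your small-$\ell$ caveat is the same minor loose end it has). The only cosmetic difference is that you route the iterate drift through $\vect{y}^t_{j,\ell}$ using row-stochasticity of $W$, while the paper works with $\vect{x}^t_{j,\ell}$ directly; both yield the identical constant $B = 4C_d + 2\beta(2C_p + AD)$.
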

\begin{proof}
    \begin{align}
\norm{\overline{\vect{x}}_{\ell}^{t} - \overline{\vect{x}}_{\ell-1}^{t}} &= \eta_{\ell} \norm{ \left[ \frac{1}{n} \left( \sum_{j=1}^{n} \vect{v}_{j,\ell-1}^{t} \right) \right] - \overline{\vect{x}}_{\ell-1}^{t}} \tag{By \cref{lmm:avg}} \nonumber\\
&\leq \eta_{\ell} D \tag{$\frac{1}{n} \sum_{j=1}^{n} \overline{\vect{v}}_{j,\ell-1}^{t} \in \mathcal{K}$,  $\overline{\vect{x}}_{\ell-1} \in \mathcal{K}$ and $D = \sup_{x,y \in \mathcal{K}^2} \norm{x-y}$ }\nonumber\\
&= \frac{AD}{\ell}\tag{Definition of $\eta_{\ell} = \frac{A}{\ell}$} \\
\norm{\overline{\vect{x}}_{\ell}^{t} - \overline{\vect{x}}_{\ell-1}^{t}} &\leq \frac{AD}{\ell} \label{eq:distance_xbars}
\end{align}
\begin{align}
\norm{\vect{x}_{j,\ell}^{t} - \vect{x}_{j,\ell-1}^{t}} &\leq \norm{ \vect{x}_{j,\ell}^t-\overline{\vect{x}}_{\ell}^{t}} + \norm{\overline{\vect{x}}_{\ell}^{t}-\overline{\vect{x}}_{\ell-1}^{t}} + \norm{\overline{\vect{x}}_{\ell}^{t}-\vect{x}_{j,\ell-1}^{t}} \tag{Triangle inequality}\\
&\leq \frac{C_p}{\ell}  + \norm{\overline{\vect{x}}_{\ell}^{t} - \overline{\vect{x}}_{\ell-1}^{t}} + \frac{C_p}{\ell-1} \tag{By \cref{lmm:avg}} \\
&\leq \frac{C_p}{\ell} + \frac{C_p}{\ell-1} + \frac{AD}{\ell} \tag{By \cref{eq:distance_xbars}}\\
\norm{\vect{x}_{j,\ell}^{t} - \vect{x}_{j,\ell-1}^{t}} &\leq \frac{C_p}{\ell} + \frac{C_p}{\ell-1} + \frac{AD}{\ell} \label{eq:xjl-xjl-1}
\end{align}
\begin{align}
\norm{\vect{d}_{i,\ell}^{t} - \vect{d}_{i,\ell-1}^{t}} &\leq \norm{\vect{d}_{i,\ell}^{t} - \nabla F_{\ell}^{t}} + \norm{\nabla F_{\ell}^{t} - \nabla F_{\ell-1}^{t}} +\norm{\nabla F_{\ell-1}^{t} - \vect{d}_{i,\ell-1}^{t}} \tag{Triangle inequality}\\
&\leq \frac{C_{d}}{\ell} + \norm{\nabla F_{\ell}^{t} - \nabla F_{\ell-1}^{t}} + \frac{C_{d}}{\ell-1} \tag{By \cref{lem:convergence}}\\
&=  \frac{C_{d}}{\ell} + \frac{C_{d}}{\ell-1} + \frac{1}{n} \sum_{j=1}^{n} \norm{\nabla f_{j}^{t}(\vect{x}_{j,\ell}^{t}) - \nabla f_{j}^{t}(\vect{x}_{j,\ell-1}^{t})} \tag{Definition of $\nabla F_{\ell}^{t}$}\\
&\leq \frac{C_{d}}{\ell} + \frac{C_{d}}{\ell-1} + \frac{\beta}{n} \sum_{j=1}^{n} \norm{\vect{x}_{j,\ell}^{t} - \vect{x}_{j,\ell-1}^{t}} \tag{$f_{j}^{t}$ is $\beta$-smooth}\\
&\leq \frac{C_{d}}{\ell} + \frac{C_{d}}{\ell-1} + \beta \left[ \frac{C_p}{\ell} + \frac{C_p}{\ell-1} + \frac{AD}{\ell}\right]  \tag{By \cref{eq:xjl-xjl-1}}\\
&\leq \frac{2C_{d}}{\ell+3} + \frac{2C_{d}}{\ell+3} + \beta \left[ \frac{2C_p}{\ell+3} + \frac{2C_p}{\ell+3} + \frac{2AD}{\ell+3}\right]  \tag{When $\ell \geq 7$}  \\
&= \frac{4C_{d} + 2\beta \left[ 2C_{p} + AD \right]}{\ell+3}\\
&\leq \frac{4C_{d} + 2\beta \left[ 2C_{p} + AD \right]}{(\ell+3)^{\alpha}}
\end{align}
\end{proof}

\begin{remark}
\label{rmk:quantized_fw}
By \cref{lmm:red_var1} and Jensen's inequality, we can deduce the following inequality
\begin{equation*}
    \E{\|\vect{d}^t_{i,\ell} - \Tilde{\vect{a}}^t_{i,\ell} \|} \leq \sqrt{\E{\|\vect{d}^t_{i,\ell} - \Tilde{\vect{a}}^t_{i,\ell} \|^{2}}} \leq \frac{Q^{1/2}}{(\ell + 4)^{1/4}}
\end{equation*}
\end{remark}

\begin{theorem}
\label{thm:stoc:version2}
Let $\mathcal{K}$ be a convex set with diameter $D$. Assume that for every $1 \leq t \leq T$, 
\begin{enumerate}
	\item functions $f^{t}_{i}$ are $\beta$-smooth, i.e. $\nabla f^{t}_{i}$ is $\beta$-Lipschitz,  (so $F^{t}$ is $\beta$-smooth);
	\item $\| \nabla f^{t}_{i}\| \leq G$ (so $\| \nabla F^{t}\| \leq G$);
	\item the gradient estimates are unbiased with bounded variance $\sigma^{2}$, i.e., 
		$\E [\widetilde{\nabla} f^{t}_{i}(\vect{x}^{t}_{i,\ell})] = \nabla f^{t}_{i}(\vect{x}^{t}_{i,\ell})$
		and $\bigl \| \widetilde{\nabla} f^{t}_{i}(\vect{x}^{t}_{i,\ell})] - \nabla f^{t}_{i}(\vect{x}^{t}_{i,\ell}) \bigr \| \leq \sigma$
		for every $1 \leq i \leq n$ and $1 \leq \ell \leq L$;
	\item the gradient estimates are $\widetilde{\beta}$-Lipschitz.
\end{enumerate}
Then, choosing the step-sizes $\eta_\ell = \min \{1, \frac{A}{\ell^{3/4}}\}$ where $A \in \mathbb{R}_+$. For all $1 \leq i \leq n$, 
\begin{align*}
    \max_{\vect{o} \in \mathcal{K}} \E \left[ \frac{1}{T} \sum_{t=1}^T \E_{\vect{x}_i^t}\left[ \langle \nabla F_t\left( \vect{x}_i^t \right), \vect{x}_i^t - \vect{o} \rangle \right] \right]
    &\leq \frac{DG + 2ADQ^{1/2}}{L^{1/4}} + \frac{2AD^2\beta}{L^{3/4}} \\
    &+\left[ \left(\beta D + G\right) + \left(\beta C_p + C_d\right)D \right]  \frac{\log L }{L} + O \left(\mathcal{R}^T \right)
\end{align*}
where $\mathcal{R}^T$ is the regret of online linear minimization oracles,
$C_p$ and $C_d$ are already defined in \Cref{lem:convergence} and $Q = 48\left(\Tilde{\beta}^2 + \beta^2\right) \left(2C_p + AD \right)^2 + 4\sigma^2 + 2B^2$ where $B = 4C_d + 2\beta [2C_p + AD]$ given in \Cref{lmm:red_var1}, \Cref{lem:forQ} and \Cref{clm:B}.

Choosing $L=T$ and oracles as gradient descent or follow-the-perturbed-leader with regret $\mathcal{R}^T =
O\left(T^{-1/2}\right)$, we obtain a convergence rate of $O\left( T^{-1/4} \right).$

\end{theorem}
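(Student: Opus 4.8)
The plan is to mirror the deterministic analysis of \Cref{thm:gap} essentially line by line, with the single structural change that the oracles $\mathcal{O}_{i,\ell}$ are now fed the variance-reduced estimate $\widetilde{\vect{a}}^t_{i,\ell}$ rather than the exact aggregate $\vect{d}^t_{i,\ell}$; the cost of this substitution is one additional error term, which I control through the variance-reduction bound of \Cref{rmk:quantized_fw}. First I would invoke $\beta$-smoothness of $F^t$ together with \Cref{lmm:avg} to reproduce \cref{tk:inner_beta}, namely
\begin{align*}
\left\langle \nabla F^t(\overline{\vect{x}}^t_\ell),\, \overline{\vect{x}}^t_{\ell+1} - \overline{\vect{x}}^t_\ell \right\rangle = \frac{\eta_\ell}{n}\sum_{i=1}^n \left\langle \nabla F^t(\overline{\vect{x}}^t_\ell),\, \vect{v}^t_{i,\ell} - \overline{\vect{x}}^t_\ell \right\rangle .
\end{align*}

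Next I would decompose each summand through $\vect{o}^t_\ell \in \argmin_{\vect{o}\in\mathcal{K}}\langle \nabla F^t(\overline{\vect{x}}^t_\ell), \vect{o}\rangle$ exactly as before, but inserting $\widetilde{\vect{a}}^t_{i,\ell}$ in place of $\vect{d}^t_{i,\ell}$:
\begin{align*}
\left\langle \nabla F^t(\overline{\vect{x}}^t_\ell), \vect{v}^t_{i,\ell} - \overline{\vect{x}}^t_\ell \right\rangle = \left\langle \nabla F^t(\overline{\vect{x}}^t_\ell) - \widetilde{\vect{a}}^t_{i,\ell}, \vect{v}^t_{i,\ell} - \vect{o}^t_\ell \right\rangle + \left\langle \widetilde{\vect{a}}^t_{i,\ell}, \vect{v}^t_{i,\ell} - \vect{o}^t_\ell \right\rangle - \mathcal{G}^t_\ell .
\end{align*}
The middle inner product is precisely the per-round cost gap handed to oracle $\mathcal{O}_{i,\ell}$, whose time-average over $t$ is $\mathcal{R}^T$. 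For the first inner product, Cauchy--Schwarz bounds it by $D\,\norm{\nabla F^t(\overline{\vect{x}}^t_\ell) - \widetilde{\vect{a}}^t_{i,\ell}}$, and I would split
\begin{align*}
\norm{\nabla F^t(\overline{\vect{x}}^t_\ell) - \widetilde{\vect{a}}^t_{i,\ell}} \leq \norm{\nabla F^t(\overline{\vect{x}}^t_\ell) - \vect{d}^t_{i,\ell}} + \norm{\vect{d}^t_{i,\ell} - \widetilde{\vect{a}}^t_{i,\ell}} .
\end{align*}
The first summand is $\le (\beta C_p + C_d)/\ell$ by the identical argument as in \Cref{thm:gap} (via \Cref{lem:convergence} and $\beta$-smoothness), while the second satisfies $\E\norm{\vect{d}^t_{i,\ell} - \widetilde{\vect{a}}^t_{i,\ell}} \le Q^{1/2}/(\ell+4)^{1/4}$ by \Cref{rmk:quantized_fw}.

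I would then combine these with the smoothness inequality, rearrange to isolate $\eta_\ell \mathcal{G}^t_\ell$, and divide by $\eta_\ell = A/\ell^{3/4}$ to obtain the stochastic analogue of \cref{tk:gap_ell} carrying one extra summand $DQ^{1/2}/(\ell+4)^{1/4}$. Averaging over the random index $\ell$ (each with probability $1/L$) and over the gradient noise, the contributions reduce to elementary sums: the telescoping term, bounded using $1/\eta_\ell \le L^{3/4}/A$ and $F^t(\overline{\vect{x}}^t_1) - F^t(\overline{\vect{x}}^t_{L+1}) \le GD$, yields $O(GD/L^{1/4})$; the harmonic sum $\sum_\ell 1/\ell \le \log L$ yields the $(\beta C_p + C_d)D\,\tfrac{\log L}{L}$ term; the new sum $\sum_{\ell=1}^L (\ell+4)^{-1/4} = O(L^{3/4})$ yields the dominant $O(DQ^{1/2}/L^{1/4})$ term; and $\sum_\ell \eta_\ell \le A\sum_\ell \ell^{-3/4} = O(L^{1/4})$ yields $O(AD^2\beta/L^{3/4})$. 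Summing over $t$ and dividing by $T$ turns the middle inner products into $O(\mathcal{R}^T)$. Finally I would transfer the bound from the averaged iterate $\overline{\vect{x}}^t_\ell$ to the individual iterate $\vect{x}^t_{i,\ell}$ via \Cref{lmm:final_step} (contributing the $(\beta D + G)\tfrac{\log L}{L}$ term) and pull the maximum over $\vect{o}$ outside the average by Jensen's inequality, exactly as in \cref{tk:connection,tk:jensen1}; setting $L=T$ gives the stated $O(T^{-1/4})$ rate.

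The main obstacle is not the Frank--Wolfe bookkeeping, which is inherited from \Cref{thm:gap}, but the control of the consensus-tracking error $\E\norm{\vect{d}^t_{i,\ell} - \widetilde{\vect{a}}^t_{i,\ell}}$ for the stochastic momentum estimator. That control is exactly what \Cref{clm:B} (the $O(\ell^{-\alpha})$ drift of $\vect{d}^t_{i,\ell}$), \Cref{lem:forQ} (the bounded one-step variance), and \Cref{lmm:red_var1} (the resulting mean-square accuracy $\E\norm{\vect{d}^t_{i,\ell} - \widetilde{\vect{a}}^t_{i,\ell}}^2 = O((\ell+4)^{-2\alpha/3})$) are designed to supply; with $\alpha = 3/4$ the accuracy $\E\norm{\vect{d}^t_{i,\ell} - \widetilde{\vect{a}}^t_{i,\ell}}$ decays only like $(\ell+4)^{-1/4}$, and it is precisely this exponent that degrades the deterministic $O(T^{-1/2})$ guarantee to $O(T^{-1/4})$. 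Some care is also required with the two nested layers of expectation --- over the gradient estimates and over the uniformly random played index --- but since the bounds of \Cref{rmk:quantized_fw} already hold in expectation, it suffices to take total expectations before carrying out the summations above.
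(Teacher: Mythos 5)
Your proposal is correct and follows essentially the same route as the paper: the paper simply reuses inequality \eqref{tk:exp_gap} from the proof of \Cref{thm:gap} and then splits $\langle \vect{d}^t_{i,\ell}, \vect{v}^t_{i,\ell}-\vect{o}^t_{\ell}\rangle$ into $\langle \vect{d}^t_{i,\ell}-\widetilde{\vect{a}}^t_{i,\ell}, \cdot\rangle + \langle \widetilde{\vect{a}}^t_{i,\ell}, \cdot\rangle$, bounding the first piece by $D\,Q^{1/2}(\ell+4)^{-1/4}$ via \Cref{rmk:quantized_fw} and summing to $O(Q^{1/2}D/L^{1/4})$, exactly as you do. Your choice to insert $\widetilde{\vect{a}}^t_{i,\ell}$ one step earlier in the decomposition and then apply the triangle inequality is an algebraically equivalent reorganization, not a different argument.
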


\begin{proof}
By \cref{tk:exp_gap} in the proof of \cref{thm:gap}, we have:
\begin{align*}
    \E_{\overline{x}^t}\left[ \mathcal{G}^t \right] 
    &\leq \frac{DG}{L^{1/4}A} + \frac{(\beta C_p + C_d) D \log L}{L} + \frac{2\beta AD^2}{L^{3/4}} + \frac{1}{nL}\sum_{\ell=1}^{L} \sum_{i=1}^n \langle \vect{d}_{i,\ell}^t, \vect{v}_{i,\ell}^t - \vect{o}_{\ell}^t \rangle\\
    &\leq \frac{DG}{L^{1/4}A} + \frac{(\beta C_p + C_d) D \log L}{L} + \frac{2\beta AD^2}{L^{3/4}} + \frac{1}{nL}\sum_{\ell=1}^{L} \sum_{i=1}^n \langle \vect{d}_{i,\ell}^t -\widetilde{\vect{a}}_{i,\ell}^t,\vect{v}_{i,\ell}^t - \vect{o}_{\ell}^t \rangle \\
     & \quad + \frac{1}{nL}\sum_{\ell=1}^{L} \sum_{i=1}^n \langle \widetilde{\vect{a}}_{i,\ell}^t,\vect{v}_{i,\ell}^t - \vect{o}_{\ell}^t \rangle\\
    &\leq \frac{DG}{L^{1/4}A} + \frac{(\beta C_p + C_d) D \log L}{L} + \frac{2\beta AD^2}{L^{3/4}} + \frac{1}{nL}\sum_{\ell=1}^{L} \sum_{i=1}^n \|\vect{d}_{i,\ell}^t -\widetilde{\vect{a}}_{i,\ell}^t\| \| \vect{v}_{i,\ell}^t - \vect{o}_{\ell}^t \| \\
     & \quad + \frac{1}{nL}\sum_{\ell=1}^{L} \sum_{i=1}^n \langle \widetilde{\vect{a}}_{i,\ell}^t,\vect{v}_{i,\ell}^t - \vect{o}_{\ell}^t \rangle
     \tag{Cauchy-Schwarz}\\
    &\leq \frac{DG}{L^{1/4}A} + \frac{(\beta C_p + C_d) D \log L}{L} + \frac{2\beta AD^2}{L^{3/4}} + \frac{D}{nL}\sum_{\ell=1}^{L} \sum_{i=1}^n \|\vect{d}_{i,\ell}^t -\widetilde{\vect{a}}_{i,\ell}^t\| \\
     & \quad + \frac{1}{nL}\sum_{\ell=1}^{L} \sum_{i=1}^n \langle \widetilde{\vect{a}}_{i,\ell}^t,\vect{v}_{i,\ell}^t - \vect{o}_{\ell}^t \rangle
     \tag{$\vect{v}_{i,\ell}^t, \vect{o}_{\ell}^t \in \mathcal{K}^2 \Rightarrow \| \vect{v}_{i,\ell}^t -  \vect{o}_{\ell}^t \| \leq D$}
\end{align*}

\begin{align*}
    \E\left[ \frac{1}{T} \sum_{t=1}^T \E_{\overline{x}^t} \left[ \mathcal{G}^t \right] \right]
    &\leq \frac{DG}{L^{1/4}A} + \frac{(\beta C_p + C_d) D \log L}{L} + \frac{2\beta AD^2}{L^{3/4}} + \frac{D}{nLT}\sum_{\ell=1}^{L} \sum_{i=1}^n \sum_{t=1}^T  \E \left[ \|\vect{d}_{i,\ell}^t -\widetilde{\vect{a}}_{i,\ell}^t\| \right] \\
     & \quad + \E \left[ \frac{1}{nLT}\sum_{\ell=1}^{L} \sum_{i=1}^n \sum_{t=1}^T \langle \widetilde{\vect{a}}_{i,\ell}^t,\vect{v}_{i,\ell}^t - \vect{o}_{\ell}^t \rangle \right] \\
     &\leq \frac{DG}{L^{1/4}A} + \frac{(\beta C_p + C_d) D \log L}{L} + \frac{2\beta AD^2}{L^{3/4}} + \frac{Q^{1/2}D}{L}\sum_{\ell=1}^{L} \frac{1}{(\ell+4)^{1/4}} \\
     & \quad + \E \left[ \frac{1}{nLT}\sum_{\ell=1}^{L} \sum_{i=1}^n \sum_{t=1}^T \langle \widetilde{\vect{a}}_{i,\ell}^t,\vect{v}_{i,\ell}^t - \vect{o}_{\ell}^t \rangle \right]
     \tag{By \cref{rmk:quantized_fw}}\\
     &\leq \frac{DG}{L^{1/4}A} + \frac{(\beta C_p + C_d) D \log L}{L} + \frac{2\beta AD^2}{L^{3/4}} + \frac{2Q^{1/2}D}{L^{1/4}}  \\
     & \quad + \E \left[ \frac{1}{nLT}\sum_{\ell=1}^{L} \sum_{i=1}^n \sum_{t=1}^T \langle \widetilde{\vect{a}}_{i,\ell}^t,\vect{v}_{i,\ell}^t - \vect{o}_{\ell}^t \rangle \right]
     \tag{$\sum_{\ell=1}^L \frac{1}{(\ell + 4)^{1/4}}\leq 2 L^{3/4}$}\\
     &\leq \frac{DG}{L^{1/4}A} + \frac{(\beta C_p + C_d) D \log L}{L} + \frac{2\beta AD^2}{L^{3/4}} + \frac{2Q^{1/2}D}{L^{1/4}} + O \left(\mathcal{R}^T\right) \\
     \tag{$\vect{v}^t_{i,\ell}$ are chosen by the online oracles with regret $\mathcal{R}^T$}
\end{align*}
Recall that, 
\begin{align*}
    \E_{\overline{\vect{x}}^t}{ \left[ \mathcal{G}^t \right]}
    &= \E_{\overline{\vect{x}}^t}{\left[ \max_{\vect{o} \in \mathcal{K}} \langle \nabla F_t\left( \overline{\vect{x}}^t \right), \overline{\vect{x}}^t - \vect{o} \rangle \right]} 
\end{align*}
Thecrefore by \cref{lmm:final_step}, 
\begin{align*}
    \E \left[ \frac{1}{T} \sum_{t=1}^T \E_{\vect{x}_i^t}\left[ \max_{\vect{o} \in \mathcal{K}} \langle \nabla F_t\left( \vect{x}_i^t \right), \vect{x}_i^t - \vect{o} \rangle \right] \right]
    &\leq \E \left[ \frac{1}{T} \sum_{t=1}^T  \E_{\overline{\vect{x}}^t}{\left[ \max_{\vect{o} \in \mathcal{K}} \langle \nabla F_t\left( \overline{\vect{x}}^t \right), \overline{\vect{x}}^t - \vect{o} \rangle \right]} \right] \\
    & \quad + \frac{(\beta D + G)C_p \log L}{L} \\
    &\leq \frac{DG + 2ADQ^{1/2}}{L^{1/4}} + \frac{2AD^2\beta}{L^{3/4}} \\
    &+\left[ \left(\beta D + G\right) + \left(\beta C_p + C_d\right)D \right]  \frac{\log L }{L} + O \left(\mathcal{R}^T \right) 
\end{align*}
Since $\max$ is a convex function, the theorem follows by applying Jensen's inequality on the left-hand side of the above equation.
\end{proof}

\end{document}